\newcommand\oprocendsymbol{\hbox{$\square$}}
\newcommand\oprocend{\relax\ifmmode\else\unskip\hfill\fi\oprocendsymbol}
\newtheorem{theorem}{Theorem}[section]
\newtheorem{lemma}[theorem]{Lemma}
\newtheorem{remark}[theorem]{Remark}
\newtheorem{assumption}[theorem]{Assumption}
\newcommand{\StatexIndent}[1][3]{%
  \setlength\@tempdima{\algorithmicindent}%
  \Statex\hskip\dimexpr#1\@tempdima\relax}
\renewcommand{\lim}{\operatornamewithlimits{lim\vphantom{p}}}
\newcommand{\real}{{\mathbb{R}}}
\newcommand{\binset}{\{0,1\}}
\renewcommand{\natural}{{\mathbb{N}}}
\newcommand{\until}[1]{\{1,\ldots,#1\}} 
\newcommand{\map}[3]{#1: #2 \rightarrow #3}
\newcommand{\subj}{\textnormal{subj. to}}
\newcommand{\nbrs}{\mathcal{N}}
\newcommand{\conv}[1]{\textnormal{conv}(#1)}
\newcommand{\smallsum}{\textstyle\sum\limits}
\newcommand{\1}{\mathbf{1}}
\newcommand{\neigh}{\ell}
\renewcommand{\Xi}{X_i^\textsc{mi}}
\newcommand{\yend}{y^\textsc{end}}
\newcommand{\zend}{z^\textsc{end}}
\newcommand{\xend}{x^\textsc{end}}
\newcommand{\Bend}{B^\textsc{end}}
\newcommand{\Qend}{Q^\textsc{end}}
\newcommand{\cz}{z^\textsc{conv}}
\newcommand{\cy}{y^\textsc{conv}}
\newcommand{\ty}{\tilde{y}}
\newcommand{\tx}{\tilde{x}}
\newcommand{\tB}{\tilde{B}}
\newcommand{\tQ}{\tilde{Q}}
\newcommand{\bx}{\bar{x}}
\newcommand{\bB}{\bar{B}}
\newcommand{\bQ}{\bar{Q}}
\newcommand{\EE}{\mathcal{E}}
\newcommand{\GG}{\mathcal{G}}
\newcommand{\OO}{\mathcal{O}}
\newcommand{\eqdef}{\coloneqq}
\newcommand{\innbrs}{\mathcal{N}}
\newcommand{\Nag}{N}
\newcommand{\Pset}{P}
\newcommand{\Dset}{D}
\newcommand{\Rset}{R}
\newcommand{\Vset}{V_A}
\newcommand{\agents}{\mathbb{I}}
\acrodef{PDVRP}{\emph{Pickup-and-Delivery Vehicle Routing Problem}}
\def \algname/{Distributed Resource Allocation for PDVRP}
\begin{document}

\title{\LARGE Multi-Robot Pickup and Delivery via Distributed Resource Allocation}

\author{Andrea Camisa*,
  Andrea Testa*,
  Giuseppe Notarstefano
  \thanks{A. Camisa, A. Testa and G. Notarstefano are with the Department of Electrical, 
  Electronic and Information Engineering, University of Bologna, Bologna, Italy. 
  \texttt{\{a.camisa, a.testa, giuseppe.notarstefano\}@unibo.it}.
  This result is part of a project that has received funding from the European 
  Research Council (ERC) under the European Union's Horizon 2020 research 
  and innovation programme (grant agreement No 638992 - OPT4SMART).
  }%
  \thanks{* These authors contributed equally to this work.}%
}

\maketitle

\begin{strip}\leavevmode\kern15pt
\begin{minipage}{\dimexpr\linewidth-30pt\relax}
{\vspace{-2.5cm}
\bf \textcopyright 2022 IEEE. Personal use of this material is permitted.  Permission from IEEE must be obtained for all other uses, in any current or future media, including reprinting/republishing this material for advertising or promotional purposes, creating new collective works, for resale or redistribution to servers or lists, or reuse of any copyrighted component of this work in other works.}
\end{minipage}
\end{strip}

\begin{abstract}
 In this paper, we consider a large-scale instance of the classical
  \ac{PDVRP} that must be solved by a network of mobile cooperating robots.
  Robots must self-coordinate and self-allocate a set of pickup/delivery tasks
  while minimizing a given cost figure. This results in a large, challenging Mixed-Integer
  Linear Problem that must be cooperatively solved %
  without a central coordinator.
  We propose a distributed algorithm based on a primal decomposition approach that
  provides a feasible solution to the problem in finite time.
  An interesting feature of the proposed scheme is that each robot computes only
  its own block of solution, thereby
  preserving privacy of sensible information. The algorithm also exhibits
  attractive scalability properties that guarantee solvability of the problem even
  in large networks.
  To the best of our knowledge, this is the first attempt to provide a scalable
  distributed solution to the problem.
  The algorithm is first tested through Gazebo simulations on a ROS~2 platform,
  highlighting the effectiveness of the proposed solution.
  Finally, experiments on a real testbed with
  a team of ground and aerial robots are provided.
\end{abstract} %

\begin{IEEEkeywords}
  Distributed Optimization;
	Distributed Robot Systems;
	Planning, Scheduling and Coordination;
	Cooperating Robots
\end{IEEEkeywords}

\section{Introduction}
\label{sec:intro}
The Pickup-and-Delivery Vehicle Routing Problem (PDVRP)
is one of the most studied combinatorial 
optimization problems. The interest is mainly motivated by the practical 
relevance in real-world applications such as such as battery
exchange in robotic networks,~\cite{kamra2017combinatorial}, pickup and
delivery in warehouses,~\cite{ham2019drone}, task scheduling~\cite{gombolay2018fast}
and delivery with precedence constraints~\cite{bai2019efficient}. 
The \ac{PDVRP} is known to be an $\mathcal{NP}$-Hard optimization 
problem, and can be solved to optimality only for small instances.
In a PDVRP, a group of vehicles has to fulfill a set of transportation requests.
Requests consist of picking up goods at some locations and delivering them to other
locations. The problem then consists of determining minimal length paths
such that all the requests are satisfied.
To achieve this, we can assign to each location a label ``P'' (pickup)
or ``D'' (delivery), and then define a graph of all possible paths that can
be traveled by vehicles. In Figure~\ref{fig:pdvrp}, we show an example scenario
with two vehicles, two pickups and two deliveries.
\begin{figure}[htbp]\centering
  \includegraphics[scale=.92]{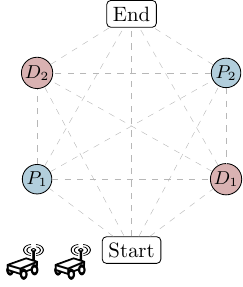}\hfill
  \includegraphics[scale=.92]{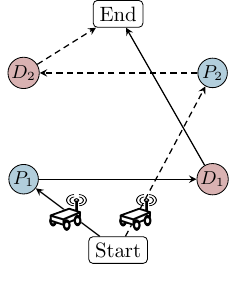}
  \caption{Example PDVRP scenario. Left: vehicles begin from the ``start'' node and must
  end at the terminal node. There are two pickup requests $P_1$ and $P_2$
  and two associated deliveries $D_1$ and $D_2$. Right: the optimal path consists of
  the first vehicle traveling through $P_1$ and $D_1$ and the second vehicle
  traveling through $P_2$ and $D_2$.}
  \label{fig:pdvrp}
\end{figure}
Note that, in order to have a well-defined graph, vehicles
must start from an initial node representing the initial position
(which may be different for each of them) and have to reach a target node.
This can be also ``virtual'' in the sense that, once the last delivery position has
been reached, the vehicle stops there and waits for further instructions.
With respect to classical Vehicle Routing Problems, the PDVRP introduces a set of additional variables and constraints that make the problem harder to solve. In particular, precedence constraints must ensure that the pickup of a good is performed before its delivery. Moreover, vehicles have capacity constraints that must be satisfied throughout the mission. These additional constraints are based on additional real variables that are not included in classical routing problems.
In order to determine the optimal path, one typically formulate 
an associated optimization problem and solves it to optimality.
Throughout the paper, we consider a general version of this optimization problem for which we propose a distributed algorithm, i.e., an algorithm that robots can run in a peer-to-peer fashion without a central coordinator. In this distributed setting, communication among robots occurs according to a given graph and it is not a design parameter.
As the distributed computation paradigm requires agents to share computation instead of data, it is particularly recommended in contexts where the local problem data (such as final assignment of tasks, vehicle capacity, cost of tasks, etc.) has to be maintained private as e.g. in military applications
or futuristic smart cities with robots belonging to different users.

\subsection{Related Work}

Several algorithms have been proposed to solve the problem to (sub)optimality
in centralized settings, we refer the reader to the surveys~\cite{toth2002vehicle,
parragh2008survey,pillac2013review,ritzinger2016survey}
for a comprehensive list of these methods in static and dynamic settings.
Online, dynamic approaches have been proposed, see, e.g.,~\cite{coltin2013online},
taking into account collision avoidance constraints among robotic agents,~\cite{liu2019task}.

In order to overcome the drawbacks of centralized approaches, as, e.g., the high
computational complexity of the problem, a branch of literature analyzes schemes based
on master-slave or communication-less architectures.
In master-slave approaches, implementations of the parallel auction based algorithm
are among the most used strategies, see, e.g.,~\cite{fauadi2013intelligent,heap2013repeated}.
As for communication-less approaches, we refer the reader to~\cite{arsie2009efficient}
for a dynamic pickup and delivery application. 
In~\cite{soeanu2011decentralized} authors address a multi-depot multi-split vehicle routing problem, 
where computing nodes apply a local heuristic based on a stochastic gradient descent and then exchange 
the local solutions in order to select the best one.

Few works address the solution of vehicle routing problems in peer-to-peer networks.
Indeed, the majority of the approaches in literature address
approximate problems or special cases of the pickup-and-delivery.
Authors in~\cite{chopra2017distributed,settimi2013subgradient,
burger2012distributed} solve unimodular task assignment problems by means of convex
optimization techniques. Other works are instead based on the solution of mixed-integer problems
with a simplified structure. 
In particular,~\cite{testa2020generalized,luo2015distributed} address generalized
assignment problems, where the order of execution of the tasks is not relevant. %
In~\cite{testa2019distributed}, authors address an %
assignment problem where agent-to-task paths are evaluated offline and capacity
constraints are not considered.
The distributed auction-based approach, see~\cite{talebpour2019adaptive,buckman2019partial}
for recent applications, is often used to address task allocation problems.
These approaches do not address more complex models with, e.g.,
load demands and execution time of tasks.
As for distributed approaches to vehicle routing problems,
in the works~\cite{pavone2009stochastic,pavone2010adaptive,bullo2011dynamic}, authors propose distributed schemes based on
Voronoi partitions for stochastic dynamic vehicle routing problems with time windows and  customer impatience.
Authors in~\cite{farinelli2020decentralized} propose a distributed algorithm where
agents communicate according to cyclic graph and perform operations one at a time.
In~\cite{abbatecola2018distributed}, a distributed scheme in which agents iteratively
solve graph partitioning problems is proposed, and is shown to converge to a suboptimal
solution of a dynamic vehicle routing problem.
A multi-depot vehicle routing problem is considered instead in~\cite{saleh2012mechanism},
where the problem is modeled as game in which customers and depots find a
feasible allocation by means of an auction game.

Since the PDVRP is a Mixed-Integer Linear Program (MILP), let us recall
some recent works addressing MILPs in distributed frameworks.
A distributed cutting-plane approach converging in finite time with arbitrary precision
to a solution of the MILP is proposed in~\cite{testa2019distributed}.
However, this approach requires each agent to compute
the entire solution of the problem and is not suited for multi-robot PDVRPs.
The work in~\cite{falsone2018distributed} addresses large-scale MILPs
by a dual decomposition approach, while in~\cite{camisa2019milp} the authors
propose a primal-decomposition algorithm with
low suboptimality bounds.
However, to the best of our knowledge, there are no works in the literature
specifically tailored for pickup-and-delivery problems.

\subsection{Contributions}
The contributions of the present paper are as follows. We formalize the Multi-vehicle
Pickup-and-Delivery Vehicle Routing Problem as a large-scale distributed optimization
problem with local and coupling constraints.
The considered formulation of the Pickup-and-delivery problem is general
and encompasses scenarios where each robot can only perform a subset of the tasks.
We propose a distributed algorithm
for the fast computation of a feasible solution to the problem. The algorithm is based on
a distributed resource allocation approach and requires only local computation
and communication among neighboring robots with no external coordination.
The proposed algorithm is scalable in the sense that the amount
of computation performed by each robot is independent of the network size.
Moreover, robots do not exchange private information with each other
such as vehicle capacity or the computed routes.
We formally prove finite-time feasibility of the solution computed by the algorithm
and we provide guidelines for practical implementation.
We first provide numerical computations on a ROS~2 set-up in which the \ac{PDVRP} is
solved with our algorithm and the dynamics of the vehicles
is realistically simulated through Gazebo.
Although the problem is $\mathcal{NP}$-hard,
the simulations highlight that our distributed algorithm is able to compute
``good-quality'' solutions within a few (fixed) iterations.
Then, we show results of real experiments performed on the ROS~2
testbed with TurtleBot3 ground robots and Crazyflie2 aerial robots.
We now highlight the main differences with other related approaches.
In~\cite{talebpour2019adaptive,buckman2019partial}, authors consider vehicle routing
problems with upper-bounds in the number of tasks a robot can serve. Dynamic variations of these set-up are considered in~\cite{pavone2009stochastic,pavone2010adaptive,bullo2011dynamic}. Our approach instead takes into account more general capacity constraints. Moreover,
we consider precedence constraints among tasks. Authors in~\cite{farinelli2020decentralized} 
propose a scheme for pickup-and-delivery problems in which vehicles execute the steps of the algorithm
once at a time and communicate according to a ring graph with a shared token. 
In our protocol instead agents can perform the optimization steps concurrently, can communicate according to general connected graphs and do not  need a shared memory.
Authors in~\cite{abbatecola2018distributed,saleh2012mechanism} consider the set-up in which vehicles start their mission from different depots, pick-up resources at given locations and then come back to their initial depot. In our set-up instead vehicles have to pick-up resources and deliver them to other stations before going to final depots.

The paper is organized as follows. In Section~\ref{sec:setup}, the problem statement
together with the needed notation and preliminaries is provided.
The distributed algorithm is provided in Section~\ref{sec:distributed_algorithm},
while Section~\ref{sec:analysis} encloses the theoretical analysis.
We describe the Gazebo simulations in Section~\ref{sec:simulations}
and, finally, we provide the experimental results in Section~\ref{sec:experiments}.

\section{Pickup-and-Delivery Vehicle Routing Problem}
\label{sec:setup}

In this section, we provide a mathematical formulation of the optimization
problem studied in the paper together with a thorough description of its structure.

\subsection{Optimization Problem Formulation}
\label{sec:optimization_problem}
We consider a scenario in which $N$ robots, indexed by $\agents \eqdef \until{N}$,
have to serve the transportation requests.
We denote by $\Pset \eqdef \until{|P|}$ the index set of pickup requests and
by $\Dset \eqdef \{|P+1|, \ldots, 2|P|\}$ the index set of delivery demands (with $\Pset \cap \Dset = \emptyset$).
To each pickup location $j\in P$ is associated a delivery location $j\in D$ (with $|P|=|D|$),
so that both the requests must be served by the same robot.
To ease the notation, we also define a set $R \eqdef P \cup D$ %
of \emph{all} the transportation requests (independently of their pickup/delivery nature).
Each request $j\in R$ is characterized by a service time $d_j\geq 0$, which is the
time needed to perform the pickup or delivery operation.
Within each request, it is also associated a load $q_j\in\real$, which is positive
if $j\in\Pset$ and negative if $j\in\Dset$.
Each robot has a maximum load capacity $C_i\geq0$ of goods that can be simultaneously held.
The travel time needed for the $i$-th vehicle to move from a location $j \in R$ to another
location $k \in R$ is denoted by $t_i^{jk}\geq0$.
In order to travel from two locations $j,k$, the $i$-th robot incurs a cost $c_i^{jk}\in\real_{\geq0}$.
Finally, two additional locations $s$ and $\sigma$ are considered. The first one
represents
the mission starting point, while the second one is a virtual ending point.
For this reason, the corresponding
demands $q^s, q^\sigma$ and service times
$d^s, d^\sigma$ are set to $0$.

The goal is to construct minimum cost paths satisfying all the transportation requests.
To this end, a graph of all possible paths through the transportation requests is defined as follows.
Let $\GG_A=(\Vset,\EE_A)$, be the graph with vertex set $\Vset=\{s,\sigma\}\cup R$
and edge set $\EE_A=\{(j,k)\mid j,k\in \Vset, j\neq k \text{ and } j\neq \sigma, k\neq s \}$.
Owing to its definition, $\EE_A$ contains edges starting from $s$ or from locations in $R$ and
ending in $\sigma$ or other locations in $R$.
For all edges $(j,k)\in\EE_A$, let $x_i^{jk}$ be a binary variable denoting whether
vehicle $i\in\until{N}$ is traveling ($x_i^{jk}=1$) or not ($x_i^{jk}=0$) from a location $j$
to a location $k$.
Also, let $B_i^j\in\real_{\geq 0}$ be an the optimization variable modeling the time at which
vehicle $i$ begins its service at location $j$. Similarly, let $Q_i^j\in\real_{\geq 0}$ be the
load of vehicle $i$ when leaving location $j$.
To keep the notation light, we denote by $x$ the vector stacking $x_i^{jk}$ for all $i, j, k$
and by $B, Q$ the vectors stacking all $B_i^j$ and $Q_i^j$.
The \ac{PDVRP} can be formulated as the following optimization problem~\cite{parragh2008survey},
\begin{subequations}%
\label{eq:PDVRP}%
\begin{align}
  \min_{
    x, B, Q
  } \: & \: \sum_{i=1}^N \sum_{(j,k) \in \EE_A} c_i^{jk} x_i^{jk} \label{eq:PDVRP_cost}
  \\
  \subj \:
  & \: \sum_{i=1}^N \sum_{k:(j,k) \in \EE_A} x_i^{jk} \geq 1
  \hspace{1.5cm} \forall j \in \Rset
  \label{eq:PDVRP_coupling_constr}
  \\
  & \: \sum_{k:(s,k) \in \EE_A} \!\!\! x_i^{sk} \!= 1
  \hspace{2.32cm} \forall i \in \agents
  \label{eq:PDVRP_local_con1}
  \\
  & \: \sum_{j:(j,\sigma) \in \EE_A} \!\!\! x_i^{j\sigma} \!= 1
  \hspace{2.32cm} \forall i \in \agents
  \label{eq:PDVRP_local_con2}
  \\
  & \: \sum_{j:(j,k) \in \EE_A} \!\!\! x_i^{jk} = \! \sum_{j:(k,j) \in \EE_A} \!\!\!\! x_i^{kj}
  \hspace{0.853cm} \forall i \in \agents, k \in \Rset
  \label{eq:PDVRP_local_con3}
  \\
  & \: \sum_{k:(j,k)\in \EE_A} \!\!\! x_{i}^{jk} = \!\!\!\!\!\!\! \sum_{k:(j+|P|,k)\in \EE_A} \!\!\!\!\!\!\!\!\! x_i^{|P|+j,k}
  \hspace{0.195cm} \forall i\in\agents, j\in P
  \label{eq:PDVRP_local_con4}\\
  & \: B_i^j\leq B_i^{j+|P|}, \hspace{2.6cm} \forall i\in\agents, j\in P
  \label{eq:PDVRP_local_con5}\\
  & \: x_i^{jk} = 1 \Rightarrow B_i^k \geq B_i^j+d^j+t_i^{jk}
  \label{eq:PDVRP_local_nonlcon_1}
  \\
  & \: x_i^{jk} = 1 \Rightarrow Q_i^k = Q_i^j+q^k
  \label{eq:PDVRP_local_nonlcon_2}
  \\
  & \: \underline{Q}^j \leq Q_i^j\leq \overline{Q}^j_i
	\hspace{2.4cm} \forall j\in \Vset, i\in \agents
	\label{eq:PDVRP_local_con6}
	\\
  & \: Q_i^s=Q_i^\text{init} \hspace{3.14cm} \forall i \in \agents
  \label{eq:PDVRP_local_con7}\\
  & \: x_i^{jk} \in \binset
  \hspace{2.1cm} \forall i \in \agents, (j,k) \in \EE_A,
  \label{eq:PDVRP_local_con8}
\end{align}
\end{subequations}
where $\underline{Q}^j=\max\{0,q^j\}$, $\overline{Q}^j_i=\min\{C_i,C_i+q^j\}$
and $Q_i^\text{init}\in\real_{\geq0}$.
We make the standing assumption that problem~\eqref{eq:PDVRP} is feasible
and admits an optimal solution.
Throughout the document, we use the convention that subscripts
denote the vehicle index, while superscripts refer to locations.
Table~\ref{tb:symbols} collects all the relevant symbols.
\begin{table}[t]\centering
	\caption{List of the main symbols and their definitions}
	\label{tb:symbols}
	\begin{tabular}{ll}
	  \hline
		\multicolumn{2}{c}{Basic definitions}
		\\
		\hline
		$\Nag\in\mathbb{N}_{\geq0}$ 
		& Number of vehicles of the system
		\\
		$\agents = \until{N}$
		& Set of vehicles
		\\
		$\Pset$, $\Dset$
		& Sets of Pickup and Delivery requests
		\\
		$R = \Pset \cup \Dset$
		& Set of all transportation requests
		\\
		$s$
		& Mission starting point
		\\
		$\sigma$
		& Mission ending point (virtual or physical)
		\\
		$\Vset = \{s, \sigma \} \cup R$
		& Set of \ac{PDVRP} graph vertices
		\\
		$\EE_A \subset \Vset \times \Vset$
		& Set of \ac{PDVRP} graph edges
		\\
    \hline
		\multicolumn{2}{c}{Optimization variables}
		\\
		\hline
		$x_i^{jk}\in\binset$
		& $1$ if vehicle $i$ travels arc $(j,k)$, $0$ otherwise
		\\
		$Q_i^j\in \mathbb{R}_{\geq0}$
		& Load of vehicle $i$ when leaving vertex $j$
		\\ 
		$B_i^j\in \mathbb{R}_{\geq0}$
		& Beginning of service of vehicle $i$ at vertex $j$
		\\
    \hline
		\multicolumn{2}{c}{Problem data}
		\\
		\hline
		$c_i^{jk} \in \mathbb{R}_{\geq0}$
		& Incurred cost if vehicle $i$ travels arc $(j,k)$
		\\
    $q^j\in\real$
    & Demand/supply at location $j\in \Rset$
    \\
		$C_i\in\real_{\geq0}$
		& Capacity of vehicle $i$
		\\
		$t_i^{jk} \in \mathbb{R}_{\geq0}$
		& Travel time from $j$ to $k$ for vehicle $i$
		\\
		$d^j\in\real_{\geq0}$
		& Service duration at $j\in \Vset$
		\\
		$Q_i^\text{init} \ge 0$
		& Initial load of vehicle $i$
		\\
		\hline
	\end{tabular}
\end{table}
Notice that, in order to satisfy~\eqref{eq:PDVRP_local_con6}, it is necessary to assume
$C_i\geq\max_{j \in R}\{q^j\}$, i.e., the generic task can be performed by at least one robot.
In order to keep the discussion not too technical, in the following we always maintain this assumption.
However, note that the algorithm proposed in this paper works also if this assumption is removed.
A discussion on this extension is given in Section~\ref{sec:discussion_parameters}, from
which it follows that trivial solutions where a robot is assigned all the tasks are not feasible.

We conclude by noting that problem~\eqref{eq:PDVRP} is mixed-integer but not linear.
Indeed, the constraints~\eqref{eq:PDVRP_local_nonlcon_1}--\eqref{eq:PDVRP_local_nonlcon_2} are
nonlinear. However, an equivalent linear formulation of these constraints is always possible
(the detailed procedure is outlined in Appendix~\ref{sec:conv_lin}). In the rest of the paper,
we refer to problem~\eqref{eq:PDVRP} as being a MILP, with the implicit assumption that
the constraints~\eqref{eq:PDVRP_local_nonlcon_1}--\eqref{eq:PDVRP_local_nonlcon_2} are
replaced by their linear version.
Notice that it is not necessary to perform such reformulation when implementing the algorithm. Indeed, several modern solvers allow for the implementation of these constraints by means of so-called \emph{indicator constraints}. We however prefer to consider a linear reformulation in order to streamline the analysis.

\subsection{Description of Cost and Constraints}
\label{sec:cost_constraint_description}

Let us detail the cost and constraints of problem~\eqref{eq:PDVRP}.
The objective~\eqref{eq:PDVRP_cost} minimizes the total route cost, in particular,
the total euclidean distance traveled by robots.

Constraint~\eqref{eq:PDVRP_coupling_constr} enforces that every location has to be
visited at least once. Typically, PDVRP formulations consider this constraint as an
equality, however, in the considered case of cost being the total distance,
the solution is the same both with the equality and with the inequality.
We stick to the inequality formulation as this
allows us to exploit the problem structure and efficiently solve the problem.
Constraints~\eqref{eq:PDVRP_local_con1}--\eqref{eq:PDVRP_local_con2} guarantee that
every vehicle starts its mission at $s$ and ends at $\sigma$.
Equality~\eqref{eq:PDVRP_local_con3} is a flow conservation constraint, meaning that if a
vehicle enters a location $k$ it also has to leave it.
Constraint~\eqref{eq:PDVRP_local_con4} ensures that, if a robot $i$ performs a pickup
operation, it also has to perform the corresponding delivery.
Inequality~\eqref{eq:PDVRP_local_con5} imposes that deliveries have
to occur after pickups.
Constraint~\eqref{eq:PDVRP_local_nonlcon_1} avoids subtours in each vehicle route (i.e. paths passing
from the same location more than once), while inequalities~\eqref{eq:PDVRP_local_nonlcon_2}--\eqref{eq:PDVRP_local_con6}
ensure that the total vehicle capacity is never exceeded.
Finally,~\eqref{eq:PDVRP_local_con7} takes into account the initial load of the
$i$-th robot.

\section{Distributed Algorithm}
\label{sec:distributed_algorithm}
In this section, we propose our distributed algorithm to solve the Multi-vehicle
\ac{PDVRP}. We first present the distributed problem setup. Then, we formally
describe the proposed distributed algorithm.

\subsection{Toward a Distributed Resource Allocation Scheme}
\label{sec:distributed_allocation_scheme}

We assume robots aim to solve problem~\eqref{eq:PDVRP} in a distributed fashion, i.e.
without a central (coordinating) node. In order to solve the problem, we suppose
each robot is equipped with its own communication and computation capabilities.
Robots can exchange information according to a static communication network
modeled as a connected and undirected graph $\GG=(\until{N},\EE)$.
The graph $\GG$ models the communication in the
sense that there is an edge $(i,\neigh) \in \EE$ if and only if robot $i$ is able
to send information to robot $\neigh$.
For each node $i$, the set of neighbors of $i$ is denoted 
by $\innbrs_{i} = \{ \neigh \in \agents : (i,\neigh) \in \EE \}$.
We consider the challenging scenario in which the $i$-th robot only
knows problem data related to it, namely the $i$-th travelling times $t_i^{jk}$, the $i$-th local capacity $C_i$
and the $i$-th cost entries $c_i^{jk}$, thus not having access to the entire problem formulation.
However, we reasonably assume that all the robots know the demand/supply
values $q_j$ and service time $d_j$ for each task request $j\in\Vset$.

Note that the optimization variables in~\eqref{eq:PDVRP} associated with a robot $i$
are all and only the variables with subscript $i$ (i.e. $x_i^{jk}$, $B_i^j$ and $Q_i^j$ for all $j, k$).
In order to simplify the notation,
let us define for all $i$ the set
\begin{align}
  Z_i = \Big\{
    (x_i, B_i, Q_i) \text{ such that
    \eqref{eq:PDVRP_local_con1}--\eqref{eq:PDVRP_local_con8} are satisfied}
  \Big\}.
\end{align}
Indeed, note that the constraints~\eqref{eq:PDVRP_local_con1}--\eqref{eq:PDVRP_local_con8}
are repeated for each index $i$.
With this shorthand, any route that robot $i$ can implement can be denoted
more shortly as $(x_i, B_i, Q_i) \in Z_i$.
However, note that feasible solutions to problem~\eqref{eq:PDVRP} must not only be valid
routes, but they must also satisfy the pickup/delivery demand.
More formally, the vectors $(x_i, B_i, Q_i)$ must satisfy both the constraints
\begin{subequations}
\begin{align}
  & (x_i, B_i, Q_i) \in Z_i \hspace{1.5cm} \forall i \in \agents,
  \\
   \text{ and }\quad  &\sum_{i=1}^N \sum_{k:(j,k) \in \EE_A} x_i^{jk} \geq 1
  \hspace{0.75cm} \forall j \in \Rset,
  \label{eq:coupling_constr}
\end{align}%
\end{subequations}
where~\eqref{eq:coupling_constr} is the coupling constraint~\eqref{eq:PDVRP_coupling_constr}.
In light of this observation, the main idea to devise a distributed algorithm for
problem~\eqref{eq:PDVRP} is to perform a negotiation to determine the value of
the left-hand side of the constraint~\eqref{eq:coupling_constr} in a distributed way.
This technique is known as \emph{primal decomposition}
(see also Appendix~\ref{sec:primal_decomp}).
Let us define \emph{allocation} vectors $y_i \in \real^{|\Rset|}$ that add up to the
right-hand side of~\eqref{eq:coupling_constr}, i.e.
\begin{align*}
  \sum_{i=1}^N [y_i]_j = 1, \hspace{0.75cm} \forall j \in \Rset,
\end{align*}
where $[y_i]_j$ denotes the $j$-th component of $y_i$.
The variables $y_i$ should be interpreted as the allocation of a resource which is shared among the robots (cf. Appendix~\ref{sec:primal_decomp}).
Each robot $i$ will aim to determine its allocation $y_i$ in such a way that
\begin{align*}
  \sum_{k:(j,k) \in \EE_A} x_i^{jk} \ge [y_i]_j,
  \hspace{0.75cm} \forall j \in \Rset,
\end{align*}
from which it directly follows that~\eqref{eq:coupling_constr} is satisfied.
As it will be clear from the
forthcoming analysis, the $j$-th entry of the vector $y_i$ determines
whether or not robot $i$ must perform task $j$.
In the next subsection, we introduce our distributed algorithm, whose purpose is
to coordinate the computation of allocation vectors $y_i$. such
that~\eqref{eq:coupling_constr} is satisfied.

\subsection{Distributed Algorithm Description}
\label{sec:alg_description}
Let us now introduce our distributed algorithm. Let $t \in \natural$ denote the iteration index.
Each robot $i$ maintains an estimate of the local allocation vector $y_i^t \in \real^{|R|}$.
At each iteration, the vector $y_i^t$ is updated according
to~\eqref{eq:alg_distributed_z_LP}--\eqref{eq:alg_distributed_y_update}.
After a finite number of iterations, say $T_f \in \natural$, the robots compute a tentative solution to the \ac{PDVRP} based
on the last computed allocation $y_i^{T_f}$ with~\eqref{eq:alg_rounding}--\eqref{eq:alg_MILP}.
The whole algorithm can be seen as a subgradient method applied to a suitable, convex reformulation of problem~\eqref{eq:PDVRP}.
Algorithm~\ref{alg:algorithm} summarizes the scheme as performed by each robot $i$.
The symbol $\conv{Z_i}$ denotes the convex hull of the set $Z_i$,
while $\alpha^t$ is a step-size sequence. The algorithm has also some tuning
parameters that are reported on the top of the table.

\begin{algorithm}[htbp]
\floatname{algorithm}{Algorithm}

  \begin{algorithmic}[0]
    \Statex \textbf{Parameters}:
       $T_f > 0$, $M > 0$, $0 < \delta < 1$.
       \smallskip
    
    \Statex \textbf{State}: $y_i \in \real^{|\Rset|}$ (initialized at $[y_{i}^0]_j = \delta/N$ for all $j \in \Rset$).
    \medskip

    \Statex \textbf{Repeat} for $t = 0, 1, \ldots, T_f-1$:
    \smallskip
    \begin{tikzpicture}[remember picture, overlay]
      \draw (-4.7,-0.28) -- +(0,-4.9);
    \end{tikzpicture}
    
      \StatexIndent[0.75]
      \textbf{Compute} $\mu_i^t$ as Lagrange multiplier of linear program
      \begin{align} 
      \label{eq:alg_distributed_z_LP}
      \begin{split}      
        \min_{x_i, B_i, Q_i, v_i} \: &\: \sum_{(j,k) \in \EE_A} c_i^{jk} x_i^{jk} + M v_i
        \\
        \subj \:
        & \: \sum_{k:(j,k) \in \EE_A} x_i^{jk} \geq [y_i^t]_j - v_i, \qquad \forall j \in \Rset
        \\
        & \: (x_i, B_i, Q_i) \in \conv{Z_i}, \:\: v_i \ge 0
      \end{split}
      \end{align}
      \smallskip

      \StatexIndent[0.75]
      \textbf{Receive} $\mu_{\neigh}^t$ from neighbors $\neigh\in\nbrs_i$ and update
      \begin{align}
      \label{eq:alg_distributed_y_update}
        y_{i}^{t+1} = y_{i}^t - \alpha^t \sum_{\neigh \in \nbrs_i} \big( \mu_{i}^t - \mu_{\neigh}^t \big)
      \end{align}

      \Statex
      \textbf{Perform} component-wise thresholding of allocation
      \begin{align}
        [\yend_i]_j = \min\Big( [y_i^{T_f}]_j, 1 \Big),\hspace{1cm} \forall j \in \Rset
      \label{eq:alg_rounding}
      \end{align}
      
      \Statex
      \textbf{Return} $(\xend_i, \Bend_i, \Qend_i)$ as optimal solution of MILP
      \begin{align}
      \label{eq:alg_MILP}
      \begin{split}      
        \min_{x_i, B_i, Q_i} \: & \: \sum_{(j,k) \in \EE_A} c_i^{jk} x_i^{jk}
        \\
        \subj \: 
        & \: \sum_{k:(j,k) \in \EE_A} x_i^{jk} \geq [\yend_i]_j, \qquad \forall j \in \Rset
        \\
        & \: (x_i, B_i, Q_i) \in Z_i
      \end{split}
      \end{align}
      
  \end{algorithmic}
  \caption{\algname/}
  \label{alg:algorithm}
\end{algorithm}

Let us informally comment on the algorithm table. The algorithm is composed of two
logic blocks. The first block, represented by the steps~\eqref{eq:alg_distributed_z_LP}--\eqref{eq:alg_distributed_y_update},
is repeated in an iterative manner and is aimed at computing a final allocation vector $y_i^{T_f}$.
Notice that in~\eqref{eq:alg_distributed_z_LP} we replace the mixed-integer set $Z_i$ with a convex, polyhedral set. This makes the problem easier to solve. Moreover, thanks to the Shapley-Folkman lemma, part of the optimal decision variables for~\eqref{eq:alg_distributed_z_LP} satisfy the binary constraints. The integrality of the remaining optimization variables is recovered at the end of the algorithm in~\eqref{eq:alg_MILP}. 
This significantly reduces the computational burden of the scheme.
As for the update in~\eqref{eq:alg_distributed_y_update}, it is a subgradient-based iteration on the resource allocation variable $y_i$. An analysis of this scheme is provided in Section~\ref{sec:analysis_dpd}.
In the second block, represented by the steps~\eqref{eq:alg_rounding}--\eqref{eq:alg_MILP},
the final allocation is used to determine a solution to the original problem.
The thresholding step~\eqref{eq:alg_rounding}
rectifies the current local allocation $y_i^{T_f}$ to obtain the final allocation $\yend_i$,
which is fed to problem~\eqref{eq:alg_MILP}
to compute the local portion of solution $(\xend_i, \Bend_i, \Qend_i)$.
An analysis of the algorithm is provided in Section~\ref{sec:analysis}, while
a discussion on the choice of the tunable parameters $M$, $T_f$ and $\delta$
can be found in Section~\ref{sec:discussion_parameters}.

A few additional remarks are in order. First, note that the only information
exchanged with neighbors are the Lagrange
multipliers $\mu_i^t$. Thus, robots never exchange sensitive local information
(such as the cost, the load or the computed routes),
therefore the algorithm maintains privacy.
Remarkably, even though the number of robots increase, the amount of local
computation remains unchanged. This scalability property is attractive and
enables the solution of the problem even in large networks of robots.

In order to state the theoretical properties of the algorithm, we make the following
standard assumption on the step-size $\alpha^t$
appearing in~\eqref{eq:alg_distributed_y_update}.
\begin{assumption}
\label{ass:step-size}
  The step-size sequence $\{ \alpha^t \}_{t\ge0}$, with each $\alpha^t \ge 0$,
  satisfies $\sum_{t=0}^{\infty} \alpha^t = \infty$,
  $\sum_{t=0}^{\infty} \big( \alpha^t \big)^2 < \infty$.
  \oprocend
\end{assumption}
A discussion on possible choices of the step-size satisfying Assumption~\ref{ass:step-size}
is given in Section~\ref{sec:discussion_parameters}.
The next theorem represents the central result of the paper.
\begin{theorem}[Finite-time feasibility]
\label{thm:finite_time_feasibility}
  Let Assumption~\ref{ass:step-size} hold and let $0 < \delta < 1$.
  Then, for a sufficiently large $M > 0$, there exists a time $T_\delta > 0$
  such that the vector $(\zend_1, \ldots, \zend_N)$, the aggregate output
  of Algorithm~\ref{alg:algorithm}, with each $\zend_i = (\xend_i, \Bend_i, \Qend_i)$,
  is a feasible solution to problem~\eqref{eq:PDVRP}, provided that the total
  iteration count satisfies $T_f \ge T_\delta$.
\oprocend
\end{theorem}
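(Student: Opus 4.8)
The plan is to prove the theorem in two stages, mirroring the two logical blocks of the algorithm. The first stage establishes the asymptotic behavior of the allocation-generating iteration~\eqref{eq:alg_distributed_z_LP}--\eqref{eq:alg_distributed_y_update}, and the second stage shows that once the allocations are ``good enough,'' the thresholding~\eqref{eq:alg_rounding} followed by the local MILP~\eqref{eq:alg_MILP} yields a globally feasible point. First I would interpret the iteration as a distributed dual subgradient (or distributed resource allocation) scheme applied to the primal-decomposition master problem referenced in Appendix~\ref{sec:primal_decomp}. The quantity $\mu_i^t$ is the Lagrange multiplier of the relaxed LP~\eqref{eq:alg_distributed_z_LP}, and the update~\eqref{eq:alg_distributed_y_update} is a Laplacian-based consensus/diffusion step that conserves the total allocation $\sum_i y_i^t$. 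The key observation is that the conservation $\sum_{i=1}^N [y_i^t]_j = \delta$ for all $j$ is preserved at every iteration, since each term $\mu_i^t - \mu_\ell^t$ is cancelled by its symmetric counterpart over the undirected edge set $\EE$; I would verify this invariant explicitly as it is what ultimately drives the coupling constraint to be satisfied.

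Next I would invoke Assumption~\ref{ass:step-size} to obtain convergence of the subgradient-type dynamics. Under the standard diminishing step-size conditions $\sum_t \alpha^t = \infty$ and $\sum_t (\alpha^t)^2 < \infty$, the local multipliers $\mu_i^t$ reach consensus and the allocations $y_i^t$ converge to an optimal allocation $y_i^\star$ of the primal master problem. The role of the penalty term $M v_i$ with the slack $v_i \ge 0$ is to guarantee that~\eqref{eq:alg_distributed_z_LP} is always feasible (so the multipliers are well defined) while ensuring that, for $M$ large enough, the penalty is exact: at the limit allocation the optimal slack is $v_i^\star = 0$, so each limiting local LP actually satisfies $\sum_{k} x_i^{jk} \ge [y_i^\star]_j$ without violation. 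I would cite the exact-penalty argument to fix the threshold on $M$ and conclude that the limit allocation is feasible for the resource-allocation problem, meaning $\sum_{i=1}^N [y_i^\star]_j = \delta < 1$ componentwise while each $[y_i^\star]_j \in [0,1]$.

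The crux of the second stage is to connect the convergent-but-fractional allocation to an integer-feasible routing, and this is where I expect the main obstacle. The subtlety is that the LP~\eqref{eq:alg_distributed_z_LP} uses $\conv{Z_i}$ and produces fractional allocations summing to $\delta$, whereas the returned solution must satisfy the \emph{integer} coupling constraint $\sum_i \sum_k x_i^{jk} \ge 1$. The resolution hinges on the choice $0 < \delta < 1$ together with the thresholding~\eqref{eq:alg_rounding}: I would argue that, because the allocations converge, for all $t$ beyond some $T_\delta$ each component $[y_i^t]_j$ is within a small neighborhood of $[y_i^\star]_j$, and the structure of the assignment-type constraints forces the limiting allocation to concentrate essentially on a single robot per task. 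Specifically, I would show that for $j$ fixed the values $[y_i^\star]_j$ are such that exactly one robot $i(j)$ has $[y_i^\star]_{j} $ bounded away from zero by an amount exceeding $1-\delta$, so that after thresholding the local MILP~\eqref{eq:alg_MILP} for that robot is forced to set $\sum_k x_{i(j)}^{jk} = 1$ (an integer variable satisfying $\ge$ a positive fractional right-hand side must round up to $1$), thereby covering task $j$. Establishing this concentration rigorously---i.e. that the diminishing-step limit of the distributed scheme lands on an allocation that is ``integer-inducing'' after thresholding---is the delicate point, and I would handle it by combining the exactness of the penalty (which removes slack) with the total-unimodularity / vertex structure of the flow polytope $\conv{Z_i}$ that guarantees the relevant LP optimizers take integral values on the degree-of-task components.

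Finally I would assemble the pieces: feasibility of each returned $\zend_i$ for the local constraints is immediate since~\eqref{eq:alg_MILP} enforces $(x_i,B_i,Q_i) \in Z_i$, which is exactly~\eqref{eq:PDVRP_local_con1}--\eqref{eq:PDVRP_local_con8}; and the covering argument above establishes the coupling constraint~\eqref{eq:PDVRP_coupling_constr}. Defining $T_\delta$ as the first iteration after which the allocation neighborhood is tight enough for the thresholding/rounding implication to hold, any $T_f \ge T_\delta$ yields a feasible $(\zend_1,\ldots,\zend_N)$, completing the proof.
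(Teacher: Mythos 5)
Your first stage is sound and matches the paper: the conservation invariant $\sum_{i=1}^N y_i^t = \delta\1$ and the convergence of the allocations to an optimal allocation of the convexified problem under Assumption~\ref{ass:step-size}, with $M$ an exact penalty parameter, are exactly the content of Lemma~\ref{lemma:DPD_convergence} (imported from the primal-decomposition literature). The gap is in your second stage. Your covering argument rests on a concentration claim --- that for each task $j$ exactly one robot $i(j)$ ends up with $[y_{i(j)}^\star]_j$ bounded away from zero by more than $1-\delta$ --- and this claim is both false in general and unnecessary. It is false because the allocations for component $j$ sum to $\delta$; for small $\delta$ (say $\delta = 0.1$) no nonnegative allocation component can exceed $1-\delta = 0.9$, and nothing prevents the optimal \emph{fractional} allocation of the relaxed problem from splitting a task across several robots (e.g., $[\cy_1]_j = [\cy_2]_j = \delta/2$). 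Your appeal to total unimodularity of $\conv{Z_i}$ to force integral LP optimizers also fails: the PDVRP is $\mathcal{NP}$-hard, the linearized subtour-elimination and big-$M$ load constraints destroy any network-flow structure, and even though the vertices of $\conv{Z_i}$ are integral by definition of convex hull, the optimizers of the allocation-constrained LP need not sit at those vertices.

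The paper's actual mechanism is simpler and avoids concentration entirely. It never needs any single robot to receive a large share of task $j$; it only needs the \emph{total} thresholded allocation $\sum_{i=1}^N [\yend_i]_j$ to remain strictly positive. The proof defines the clipping mismatch $\rho_i = y_i^{T_f} - \yend_i \ge 0$ and shows $[\rho_i]_j < \delta/N$ for every robot: once $T_f \ge T_\delta$ with $\Vert y_i^{T_f} - \cy_i \Vert_\infty < \delta/N$, a clipped component satisfies $[\rho_i]_j = [y_i^{T_f}]_j - 1 < [\cy_i]_j - 1 + \delta/N \le \delta/N$, where the last step uses Lemma~\ref{lemma:optimal_allocation_characterization} ($\cy_i \le \1$, itself a consequence of the fact that no route can pass through a location more than once). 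Then $\sum_i [\yend_i]_j = \delta - \sum_i [\rho_i]_j > \delta - N\cdot\delta/N = 0$, so \emph{some} robot has $[\yend_i]_j > 0$; by the integrality argument of Lemma~\ref{lemma:well_posed_milp}, that robot's MILP~\eqref{eq:alg_MILP} must set $\sum_{k:(j,k)\in\EE_A} x_i^{jk} = 1$. Finally, since each such sum is in $\{0,1\}$, strict positivity of $\sum_{i=1}^N \sum_{k:(j,k)\in\EE_A} \xend_i^{jk}$ lifts to $\ge 1$ (Lemma~\ref{lemma:rhs_relaxation}), which is~\eqref{eq:PDVRP_coupling_constr}. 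In short: replace your ``one robot gets nearly everything'' step with ``the clipped total stays positive and integrality of the $x$ variables rounds the cover up to one,'' and the rest of your outline goes through.
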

The proof is provided in Section~\ref{sec:proof_theorem}.
In Section~\ref{sec:simulations}, we perform an empirical study on the
optimality of the solutions computed by Algorithm~\ref{alg:algorithm}.

\begin{remark}[Algorithm complexity]
  Note that the steps~\eqref{eq:alg_rounding}--\eqref{eq:alg_MILP} are performed only once at the end of the algorithm.  Steps~\eqref{eq:alg_distributed_z_LP}--\eqref{eq:alg_distributed_y_update} instead are performed $T_f$ times.
   Let $L_i$ denote the number of constraints of $\conv{Z_i}$. Then, the average complexity of~\eqref{eq:alg_distributed_z_LP} (solved via the simplex algorithm) is $\OO(L_i + |R|)$.
   Step~\eqref{eq:alg_distributed_y_update} consists of sums and multiplications and its complexity is $\OO(|\mathcal{N}_i||R|)$.  
   Step~\eqref{eq:alg_rounding} is a thresholding with complexity $\OO(|R|)$.
    Excluding the MILP~\eqref{eq:alg_MILP}, the total algorithm complexity is thus $\OO(T_f(|\mathcal{N}_i||R| + L_i))$.
     Finally,~\eqref{eq:alg_MILP} can be solved using a branch-and-bound scheme, which complexity is exponential in the number $O(|R|^2)$ of integer variables.
\oprocend
\end{remark}

\section{Algorithm Analysis}
\label{sec:analysis}

In this section, we provide a theoretical study of Algorithm~\ref{alg:algorithm}.
The algorithm inherits the ideas of~\cite{camisa2019milp}, however here
we are considering an enlargement of the constraints rather than a restriction,
and moreover there is a thresholding operation which is specific of
Algorithm~\ref{alg:algorithm}. We provide a
compact analysis focused on those properties that are peculiar to the PDVRP
and that are not considered elsewhere.
To arrive at the final result given by Theorem~\ref{thm:finite_time_feasibility},
we will proceed with the following steps.
\begin{enumerate}
	\item \label{en:feas} We show that the steps in~\eqref{eq:alg_distributed_z_LP} and~\eqref{eq:alg_MILP} are well posed (Section~\ref{sec:feasibility}).
	\item \label{en:resource_alloc} We analyze the steps in~\eqref{eq:alg_distributed_z_LP} and~\eqref{eq:alg_distributed_y_update}, needed to retrieve an optimal solution to a relaxed PDVRP (Section~\ref{sec:analysis_dpd}).
	\item Finally, we prove Theorem~\ref{thm:finite_time_feasibility} (Section~\ref{sec:proof_theorem}) with the help of auxiliary technical lemmas (Section~\ref{sec:intermediate_results}).
\end{enumerate}
From now on, we denote by $\1$ the vector of ones with appropriate dimension.

\subsection{Feasibility of Local Problems}
\label{sec:feasibility}
We begin the analysis by proving that the algorithm is well posed. In particular, we show that it is indeed
possible to solve the problems~\eqref{eq:alg_distributed_z_LP} and~\eqref{eq:alg_MILP}.
The next lemma formalizes feasibility of problem~\eqref{eq:alg_distributed_z_LP}.
\begin{lemma}
\label{lemma:well_posed_conv}
  Consider a robot $i \in \agents$ and let $y_i^t$ be allocation computed by
  Algorithm~\ref{alg:algorithm} at an iteration $t$.
  Then, problem~\eqref{eq:alg_distributed_z_LP} is feasible.
\end{lemma}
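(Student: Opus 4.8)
The plan is to exhibit an explicit feasible point for problem~\eqref{eq:alg_distributed_z_LP}, thereby establishing feasibility directly. The decision variables are $(x_i, B_i, Q_i, v_i)$, and the constraints are the allocation-covering inequalities $\sum_{k:(j,k)\in\EE_A} x_i^{jk} \ge [y_i^t]_j - v_i$, the requirement $(x_i, B_i, Q_i) \in \conv{Z_i}$, and $v_i \ge 0$. The key observation is that $v_i$ is a free slack variable constrained only to be nonnegative, so it can be chosen large enough to make the covering inequalities trivially satisfiable once we have \emph{any} point of $\conv{Z_i}$ in hand.

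First I would argue that $Z_i$ is nonempty, so that $\conv{Z_i}$ is nonempty as well. This follows from the standing assumption that problem~\eqref{eq:PDVRP} is feasible: a global feasible solution restricts to a point $(x_i, B_i, Q_i)$ satisfying the per-robot constraints~\eqref{eq:PDVRP_local_con1}--\eqref{eq:PDVRP_local_con8}, which is exactly the defining condition of $Z_i$. Alternatively, one can construct a trivial route directly — for instance the empty route $x_i^{s\sigma}=1$ with all other arc variables zero, which satisfies the start/end constraints~\eqref{eq:PDVRP_local_con1}--\eqref{eq:PDVRP_local_con2}, flow conservation~\eqref{eq:PDVRP_local_con3}, the paired pickup/delivery and precedence constraints~\eqref{eq:PDVRP_local_con4}--\eqref{eq:PDVRP_local_con5} (both sides being zero), together with consistent choices of $B_i$ and $Q_i$ (with $Q_i^s = Q_i^\text{init}$), and the capacity bounds~\eqref{eq:PDVRP_local_con6} under the standing capacity assumption. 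Either way, pick a point $(\hat{x}_i, \hat{B}_i, \hat{Q}_i) \in Z_i \subseteq \conv{Z_i}$.

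Next, with this point fixed, I would simply set $\hat{v}_i = \max_{j \in \Rset} [y_i^t]_j$, or more conservatively any value at least $\max_{j\in\Rset}\big([y_i^t]_j - \sum_{k:(j,k)\in\EE_A}\hat{x}_i^{jk}\big)$ and also $\ge 0$. Since all arc-sum terms are nonnegative, choosing $\hat{v}_i \ge \max_j [y_i^t]_j$ already guarantees $\sum_{k} \hat{x}_i^{jk} \ge [y_i^t]_j - \hat{v}_i$ for every $j$, because the right-hand side is then at most $[y_i^t]_j - \max_j[y_i^t]_j \le 0 \le \sum_k \hat{x}_i^{jk}$. Thus $(\hat{x}_i, \hat{B}_i, \hat{Q}_i, \hat{v}_i)$ satisfies all three constraint groups, proving feasibility. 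This step is entirely routine.

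I do not anticipate a genuine obstacle here; the whole point of introducing the slack variable $v_i$ with its large penalty $M$ in the objective is precisely to make~\eqref{eq:alg_distributed_z_LP} feasible for every allocation $y_i^t$, however the update~\eqref{eq:alg_distributed_y_update} has moved it. The only subtlety worth a sentence is making sure that the chosen $(\hat{x}_i,\hat{B}_i,\hat{Q}_i)$ genuinely lies in $\conv{Z_i}$ rather than needing it to satisfy the original covering inequality~\eqref{eq:PDVRP_coupling_constr}; but the LP in~\eqref{eq:alg_distributed_z_LP} does not impose that coupling constraint locally, so nonemptiness of $Z_i$ is all that is required. Hence the lemma reduces to the nonemptiness of $Z_i$ plus the freedom in $v_i$.
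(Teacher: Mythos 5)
Your proposal is correct and rests on exactly the same two facts as the paper's proof: $Z_i$ is nonempty by the standing feasibility assumption (hence $\conv{Z_i} \supseteq Z_i$ is nonempty), and the slack $v_i$ can absorb any violation of the covering inequalities. The paper packages this by observing that~\eqref{eq:alg_distributed_z_LP} is the epigraph form of a penalized problem whose only constraint is $(x_i,B_i,Q_i)\in\conv{Z_i}$, whereas you unpack the same idea into an explicit feasible point $(\hat{x}_i,\hat{B}_i,\hat{Q}_i,\hat{v}_i)$ with $\hat{v}_i \ge \max\bigl\{0,\max_{j\in\Rset}[y_i^t]_j\bigr\}$ --- a purely presentational difference.
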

\begin{proof}
	See Appendix~\ref{sec:proof_lemma_well_posed_conv}.
\end{proof}

Proving feasibility of problem~\eqref{eq:alg_MILP} is more delicate and
relies upon the thresholding operation, as formally shown next.
\begin{lemma}
\label{lemma:well_posed_milp}
  Consider a robot $i \in \agents$ and let $\yend_i$ be the final allocation computed by Algorithm~\ref{alg:algorithm}.
  Then, problem~\eqref{eq:alg_MILP} is feasible.
\end{lemma}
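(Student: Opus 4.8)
The plan is to show that the thresholded allocation $\yend_i$ is componentwise bounded above by $1$, and then to exhibit an explicit feasible point of problem~\eqref{eq:alg_MILP}. The key observation is that the feasibility of~\eqref{eq:alg_MILP} hinges on whether a single route in $Z_i$ can simultaneously satisfy $\sum_{k:(j,k)\in\EE_A} x_i^{jk} \ge [\yend_i]_j$ for every request $j\in\Rset$. Since $x_i^{jk}\in\binset$ and the flow-conservation structure forces $\sum_{k:(j,k)\in\EE_A} x_i^{jk} \in \binset$ for each $j$, the right-hand side must lie in $[0,1]$ for any hope of feasibility. This is precisely what the thresholding step~\eqref{eq:alg_rounding} guarantees: by construction $[\yend_i]_j = \min([y_i^{T_f}]_j, 1) \le 1$ for all $j$.

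First I would record that, thanks to~\eqref{eq:alg_rounding}, each component satisfies $[\yend_i]_j \le 1$. Next I would construct a specific route that serves \emph{all} requests, namely the trivial route in which robot $i$ begins at $s$, visits every request in $\Rset$ (respecting the pickup-before-delivery ordering, taking each pickup $j\in P$ immediately followed by its delivery $j+|P|$), and ends at $\sigma$. For this route $\sum_{k:(j,k)\in\EE_A} x_i^{jk} = 1$ for every $j\in\Rset$, so the constraint $\sum_{k:(j,k)\in\EE_A} x_i^{jk} \ge [\yend_i]_j$ is satisfied because the left-hand side equals $1 \ge [\yend_i]_j$. It remains to check that this route, together with appropriate choices of $B_i$ and $Q_i$, lies in $Z_i$, i.e. satisfies~\eqref{eq:PDVRP_local_con1}--\eqref{eq:PDVRP_local_con8}. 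The binary, flow-conservation, start/end, and pickup-delivery-pairing constraints hold by the explicit sequential construction; the timing variables $B_i^j$ can be set by accumulating service and travel times along the route (satisfying~\eqref{eq:PDVRP_local_nonlcon_1} and~\eqref{eq:PDVRP_local_con5}), and the load variables $Q_i^j$ follow from~\eqref{eq:PDVRP_local_nonlcon_2} starting from $Q_i^\text{init}$, with the capacity bounds~\eqref{eq:PDVRP_local_con6} holding because of the standing assumption $C_i \ge \max_{j\in R}\{q^j\}$ (so after each pickup-delivery pair the load returns to its previous value and never exceeds $C_i$).

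The main obstacle, and the point requiring care, is verifying that the capacity constraint~\eqref{eq:PDVRP_local_con6} is respected along the all-serving route. If one naively ordered all pickups first and all deliveries afterward, the accumulated load could exceed $C_i$; this is why I would interleave each pickup $j$ immediately with its delivery $j+|P|$, so that between consecutive pairs the carried load returns to $Q_i^\text{init}$ and the instantaneous load stays within $[\underline{Q}^j, \overline{Q}^j_i]$ by virtue of $C_i \ge \max_{j\in R}\{q^j\}$. With this ordering, the load bounds are satisfied pairwise and the route is a valid element of $Z_i$. Once such an element is identified, it serves as a feasible point of~\eqref{eq:alg_MILP}, completing the proof.

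Thus the essential content is the combination of two facts: the thresholding caps the demands at $1$, and the assumption $C_i\ge\max_{j\in R}\{q^j\}$ permits a route serving every request without violating capacity, so that a universally-serving route is always available as a certificate of feasibility.
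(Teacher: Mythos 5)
Your proposal is correct and follows essentially the same route as the paper's proof: both use the thresholding step~\eqref{eq:alg_rounding} to cap each $[\yend_i]_j$ at $1$, observe that $\sum_{k:(j,k)\in\EE_A} x_i^{jk}$ is binary, and certify feasibility of~\eqref{eq:alg_MILP} with a route in $Z_i$ serving every request with positive allocation. If anything, your explicit interleaved pickup--delivery construction, with the capacity check via $C_i \ge \max_{j\in\Rset} q^j$, fills in a detail the paper merely asserts, namely that an all-serving element of $Z_i$ actually exists.
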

\begin{proof}
	See Appendix~\ref{sec:proof_lemma_well_posed_milp}.
\end{proof}

\subsection{Convergence of Distributed Resource Allocation Scheme}
\label{sec:analysis_dpd}

We now focus on the first logic block of Algorithm~\ref{alg:algorithm}, namely
steps~\eqref{eq:alg_distributed_z_LP}--\eqref{eq:alg_distributed_y_update}.
These two iterative steps can be used to
obtain an optimal allocation associated with the linear program
\begin{align}
\begin{split}
  \min_{x, B, Q} \: & \: \sum_{i=1}^N \sum_{(j,k) \in \EE_A} c_i^{jk} x_i^{jk}
  \\
  \subj \:
  & \: \sum_{i=1}^N \sum_{k:(j,k) \in \EE_A} x_i^{jk} \geq \delta
  \hspace{1.4cm} \forall j \in \Rset
  \\
  & \: (x_i, B_i, Q_i) \in \conv{Z_i}, \hspace{1cm} \forall \: i \in \agents.
\end{split}
\label{eq:convexified_pb}
\end{align}
Formally, we denote with $(\cy_1, \ldots, \cy_N)$ such optimal allocation, which is the optimal vector satisfying the conditions specified in Section~\ref{sec:distributed_allocation_scheme}.
Before stating formally this result, we note that, by the discussion in
Section~\ref{sec:distributed_allocation_scheme}, problem~\eqref{eq:convexified_pb}
is essentially the same as
problem~\eqref{eq:PDVRP}, except that the mixed-integer constraints
$(x_i, B_i, Q_i) \in Z_i$ are replaced by their convex relaxation
$(x_i, B_i, Q_i) \in \conv{Z_i}$. Moreover,
the coupling constraints are enlarged by changing the right-hand side from $1$ to $\delta \in (0,1)$
(recall that $\delta$ is a
tunable parameter of Algorithm~\ref{alg:algorithm}).

Now, denote by $(\cz_1, \ldots, \cz_N)$ an optimal
solution of problem~\eqref{eq:convexified_pb} with each $z_i = (x_i, B_i, Q_i)$,
and by $\{y_1^t, \ldots, y_N^t\}_{t \ge 0}$ the allocation vector sequence produced
by~\eqref{eq:alg_distributed_z_LP}--\eqref{eq:alg_distributed_y_update}.
The following lemma %
summarizes the convergence properties of the distributed resource allocation
scheme~\eqref{eq:alg_distributed_z_LP}--\eqref{eq:alg_distributed_y_update}.
\begin{lemma}
\label{lemma:DPD_convergence}
	Let Assumption~\ref{ass:step-size} hold and recall that $y_i^0 = \delta/N \1$
	for all $i \in \agents$.
	Moreover, let $(\cy_1, \ldots, \cy_N) \in \real^{N|\Rset|}$ be an optimal allocation associated with
	problem~\eqref{eq:convexified_pb}, i.e., a vector satisfying $\sum_{i=1}^N \cy_i = \delta \1$
	and $\sum_{k:(j,k) \in \EE_A} x_i^{jk} \geq [\cy_i]_j$ for all $j \in \Rset$ and $i \in \agents$.
	Then, for a sufficiently large $M > 0$, the distributed
	algorithm~\eqref{eq:alg_distributed_z_LP}--\eqref{eq:alg_distributed_y_update}
	generates a sequence
	$\{ y_1^t,\ldots,y_N^t \}_{t\ge 0}$
	such that
  \begin{enumerate}
	  \item $\sum_{i=1}^N y_i^t  = \delta \1$, for all $t \ge 0$;
	  
	  \item $\lim_{t \to \infty} \| y_i^t - \cy_i \| = 0$ for all
	    $i \in \agents$.
  \end{enumerate}
\end{lemma}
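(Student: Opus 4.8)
The plan is to establish the two claims in turn, proving the conservation property~\emph{(i)} first and then exploiting it in the convergence argument~\emph{(ii)}.

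\textbf{Claim (i): conservation of the aggregate allocation.} First I would argue by induction on $t$ that $\sum_{i=1}^N y_i^t = \delta\1$. The base case holds by construction, since $\sum_{i=1}^N y_i^0 = N\cdot(\delta/N)\1 = \delta\1$. For the inductive step, I would sum the update~\eqref{eq:alg_distributed_y_update} over all robots to obtain
\begin{align*}
  \sum_{i=1}^N y_i^{t+1} = \sum_{i=1}^N y_i^t - \alpha^t \sum_{i=1}^N \sum_{\neigh \in \nbrs_i} \big( \mu_i^t - \mu_\neigh^t \big).
\end{align*}
The key observation is that the double sum vanishes: since $\GG$ is undirected, each edge $(i,\neigh)$ contributes both $\mu_i^t-\mu_\neigh^t$ and $\mu_\neigh^t-\mu_i^t$, which cancel. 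Equivalently, componentwise $\sum_{i}\sum_{\neigh\in\nbrs_i}(\mu_i^t-\mu_\neigh^t) = \1^\top L \mu^t = 0$, where $L$ is the Laplacian of $\GG$ and $\1^\top L = 0$. The inductive hypothesis then yields $\sum_i y_i^{t+1} = \delta\1$.

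\textbf{Claim (ii): convergence to the optimal allocation.} The core idea is to recognize the iteration~\eqref{eq:alg_distributed_z_LP}--\eqref{eq:alg_distributed_y_update} as a distributed subgradient method for the primal-decomposition master problem associated to~\eqref{eq:convexified_pb}. To this end I would introduce for each robot the value function
\begin{align*}
  p_i(y_i) = \min_{(x_i,B_i,Q_i)\in\conv{Z_i}} \Big\{ \textstyle\sum_{(j,k)\in\EE_A} c_i^{jk} x_i^{jk} : \textstyle\sum_{k:(j,k)\in\EE_A} x_i^{jk} \ge [y_i]_j \:\: \forall j\in\Rset \Big\},
\end{align*}
so that~\eqref{eq:convexified_pb} is equivalent to $\min\{\sum_i p_i(y_i) : \sum_i y_i = \delta\1\}$. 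As the optimal value of a linear program parametrized by its right-hand side $y_i$, $p_i$ is convex, and by LP sensitivity the negative of the Lagrange multiplier of the constraint $\sum_k x_i^{jk}\ge[y_i]_j$ is a subgradient of $p_i$ at $y_i$. I would then show that, for $M$ large enough, the penalized program~\eqref{eq:alg_distributed_z_LP} is an exact reformulation whose penalized value function agrees with $p_i$ on the feasible region and whose slack vanishes at the optimum, so that the returned multiplier $\mu_i^t$ is a bona fide subgradient of the (globally finite, convex) value function at $y_i^t$.

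With this identification, the update~\eqref{eq:alg_distributed_y_update} is precisely a projected subgradient step for the master problem: the term $\sum_{\neigh}(\mu_i^t-\mu_\neigh^t)$ is the Laplacian-based distributed realization of the projection of the subgradient direction onto the affine subspace $\{\sum_i y_i = \delta\1\}$ maintained by claim~\emph{(i)}, and the optimality condition of the master is exactly consensus of the multipliers, $\mu_1=\cdots=\mu_N$. Convergence then follows from the standard analysis of diminishing step-size subgradient schemes under Assumption~\ref{ass:step-size}, where summability of $(\alpha^t)^2$ controls the subgradient error while non-summability of $\alpha^t$ drives the iterates to optimality, yielding $\|y_i^t-\cy_i\|\to 0$ for every $i$. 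Since this scheme and its convergence are established in~\cite{camisa2019milp}, the formal argument reduces to verifying that~\eqref{eq:convexified_pb} meets the structural hypotheses of that framework: convexity and finiteness of each penalized $p_i$, nonempty compactness of $\conv{Z_i}$, and connectivity of $\GG$. The step I expect to be the main obstacle is precisely the subgradient identification, namely proving that for sufficiently large $M$ the slack $v_i$ vanishes and the multiplier $\mu_i^t$ is a valid subgradient uniformly in $t$; this rests on an exact-penalty argument together with boundedness of the optimal local multipliers, which in turn requires $\conv{Z_i}$ to be a bounded polytope on which a constraint qualification holds along the entire trajectory.
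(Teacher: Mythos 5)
Your proposal is correct and follows essentially the same route as the paper, which does not prove this lemma at all but imports it verbatim from~\cite{camisa2019milp}; your sketch --- sum preservation by cancellation over the undirected edges (claim \emph{(i)}), then identification of~\eqref{eq:alg_distributed_z_LP}--\eqref{eq:alg_distributed_y_update} as a Laplacian-based subgradient method on the exactly-penalized primal-decomposition master problem with diminishing step size (claim \emph{(ii)}) --- is precisely the argument of that reference, including the correct isolation of the exact-penalty/subgradient-identification step as the technical crux. One small sign slip worth fixing: with the constraint written as $\sum_{k:(j,k)\in\EE_A} x_i^{jk} \ge [y_i]_j$, the value function $p_i$ is nondecreasing in $y_i$ and the multiplier $\mu_i \ge 0$ itself (not its negative) is a subgradient of $p_i$ at $y_i$, which is exactly what makes the update $y_i^{t+1} = y_i^t - \alpha^t \sum_{\neigh\in\nbrs_i}(\mu_i^t - \mu_\neigh^t)$ a descent-type step consistent with your own projected-subgradient description (and note the Laplacian term is a sum-preserving surrogate for, not literally equal to, the projection $\mu_i^t - \tfrac{1}{N}\sum_{\neigh=1}^N \mu_\neigh^t$, a discrepancy the analysis in~\cite{camisa2019milp} absorbs using connectivity of $\GG$).
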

\begin{proof}
We refer the reader to~\cite{camisa2019milp} for the proof.
\end{proof}

\subsection{Intermediate Results}
\label{sec:intermediate_results}

Before turning to the proof of Theorem~\ref{thm:finite_time_feasibility},
we provide some preparatory lemmas.
The first lemma justifies the use of $\delta$ (an arbitrarily small
positive number) in place of the original right-hand side
$1$ in the coupling constraints~\eqref{eq:PDVRP_coupling_constr}.
\begin{lemma}
\label{lemma:rhs_relaxation}
  For all $i \in \agents$, let $(x_i, B_i, Q_i) \in Z_i$ such that
  $\sum_{i=1}^N \sum_{k:(j,k) \in \EE_A} x_i^{jk} > 0$ for all $j \in \Rset$.
  Then, it holds $\sum_{i=1}^N \sum_{k:(j,k) \in \EE_A} x_i^{jk} \ge 1$ for all $j \in \Rset$.
\end{lemma}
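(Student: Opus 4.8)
The plan is to exploit the integrality built into the set $Z_i$. The key observation is that membership $(x_i, B_i, Q_i) \in Z_i$ forces each component $x_i^{jk}$ to obey the binary constraint~\eqref{eq:PDVRP_local_con8}, that is, $x_i^{jk} \in \binset$. Everything else in the definition of $Z_i$ is irrelevant for this particular statement; the only ingredient I need is that the routing variables take values in $\{0,1\}$.

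Concretely, I would fix an arbitrary request $j \in \Rset$ and examine the quantity $\sum_{i=1}^N \sum_{k:(j,k) \in \EE_A} x_i^{jk}$. Since every summand $x_i^{jk}$ lies in $\{0,1\}$, this is a finite sum of nonnegative integers and is therefore itself a nonnegative integer. The conclusion is then immediate from the hypothesis: the sum is assumed to be strictly positive, and any integer strictly greater than $0$ is at least $1$. As $j \in \Rset$ was arbitrary, the inequality $\sum_{i=1}^N \sum_{k:(j,k) \in \EE_A} x_i^{jk} \ge 1$ holds for all $j \in \Rset$, as claimed.

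I do not expect a genuine obstacle here, since the statement reduces to the elementary fact that a strictly positive integer is at least one; the only point requiring a moment's care is to confirm that the binary constraint~\eqref{eq:PDVRP_local_con8} is truly part of the definition of $Z_i$, which it is, as $Z_i$ collects constraints~\eqref{eq:PDVRP_local_con1}--\eqref{eq:PDVRP_local_con8}. Conceptually, the role of this lemma is to justify relaxing the right-hand side of the coupling constraint~\eqref{eq:PDVRP_coupling_constr} from $1$ down to the tunable value $\delta \in (0,1)$ used in problem~\eqref{eq:convexified_pb}: once the recovered integer routes place a strictly positive amount of flow on each location, integrality of the underlying variables automatically upgrades this to the original requirement of at least one visit, so that no feasibility is lost in passing from $\delta$ back to $1$.
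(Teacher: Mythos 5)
Your proof is correct and is essentially identical to the paper's: both fix $j \in \Rset$, invoke the binary constraint~\eqref{eq:PDVRP_local_con8} from the definition of $Z_i$ to conclude the sum is a nonnegative integer, and note that a strictly positive integer is at least $1$. No differences worth remarking on.
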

\begin{proof}
  For each component $j \in \Rset$, by assumption we have
  \begin{align*}
    \sum_{i=1}^N \sum_{k:(j,k) \in \EE_A} x_i^{jk}
    > 0.
  \end{align*}
  Note that, since each $x_i^{jk} \in \{0, 1\}$, the quantity
  $\sum_{i=1}^N \sum_{k:(j,k) \in \EE_A} x_i^{jk}$ is either zero
  or at least equal to $1$. Therefore, because of the assumption,
  we have $\sum_{i=1}^N \sum_{k:(j,k) \in \EE_A} x_i^{jk} \ge 1$
  for all $j \in \Rset$.
\end{proof}

The next two lemmas will be used in the sequel
to characterize an optimal allocation associated with problem~\eqref{eq:convexified_pb}.
\begin{lemma}
\label{lemma:zero_restriction}
  For all $i \in \agents$, let $\ty_i \in \real^{|\Rset|}$ and
  $(\tx_i, \tB_i, \tQ_i) \in \conv{Z_i}$ such that
  $\sum_{k:(j,k) \in \EE_A} \tx_i^{jk} > [\ty_i]_j$ for all $j \in \Rset$.
  Then, there exists $(\bx_i, \bB_i, \bQ_i) \in Z_i$ satisfying
  $\sum_{k:(j,k) \in \EE_A} \bx_i^{jk} > [\ty_i]_j$ for all $j \in \Rset$.
\end{lemma}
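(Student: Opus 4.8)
The plan is to reduce the fractional hypothesis to a purely combinatorial existence statement about a single route and then to construct such a route explicitly. The starting observation is the one already used in the proof of Lemma~\ref{lemma:well_posed_milp}: for any point of $Z_i$, the flow $\sum_{k:(j,k)\in\EE_A} x_i^{jk}$ through location $j$ is either $0$ or $1$, as a consequence of the flow, subtour-elimination and pairing constraints. Since the map $x_i \mapsto \sum_{k:(j,k)\in\EE_A} x_i^{jk}$ is linear, the upper bound of $1$ is preserved over the convex hull, so the given point $(\tx_i,\tB_i,\tQ_i)\in\conv{Z_i}$ satisfies $\sum_{k:(j,k)\in\EE_A}\tx_i^{jk}\le 1$ for every $j$. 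Combining this with the hypothesis $\sum_{k:(j,k)\in\EE_A}\tx_i^{jk}>[\ty_i]_j$ yields the key intermediate fact $[\ty_i]_j<1$ for all $j\in\Rset$.

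Next I would split the requests into those that must be served and those that are free. Define $S=\{j\in\Rset:[\ty_i]_j\ge 0\}$. For any candidate $(\bx_i,\bB_i,\bQ_i)\in Z_i$, whose flow values again lie in $\{0,1\}$, the target inequality $\sum_{k:(j,k)\in\EE_A}\bx_i^{jk}>[\ty_i]_j$ holds automatically for every $j\notin S$, since there $[\ty_i]_j<0$ and even a zero flow is strictly larger; for $j\in S$ it holds if and only if the flow equals $1$, because $0\le[\ty_i]_j<1$. Thus the lemma reduces to exhibiting one route in $Z_i$ that visits every location of $S$.

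The core step is therefore the construction of such a route. Because of the pairing constraint~\eqref{eq:PDVRP_local_con4}, a route that serves a pickup must also serve its delivery, so I would first form the closure $\hat S\supseteq S$ by adjoining, for each pickup (resp.\ delivery) in $S$, its paired delivery (resp.\ pickup). I then build the simple path that leaves $s$, serves the pairs of $\hat S$ one at a time—each pickup immediately followed by its delivery—and ends at $\sigma$. Defining the begin-of-service times $\bB_i$ cumulatively along this path makes~\eqref{eq:PDVRP_local_nonlcon_1} and the ordering~\eqref{eq:PDVRP_local_con5} hold with equality, while serving each pair in immediate succession returns the load to its value before each delivery. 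The main obstacle is verifying the capacity bounds~\eqref{eq:PDVRP_local_nonlcon_2}--\eqref{eq:PDVRP_local_con6}: this is exactly where the standing assumption $C_i\ge\max_{j\in\Rset}q^j$ enters, guaranteeing that carrying a single pickup never exceeds $\overline{Q}^j_i$, so the constructed route indeed belongs to $Z_i$. By construction its flow equals $1$ on $\hat S\supseteq S$ and $0$ elsewhere, which through the dichotomy above gives $\sum_{k:(j,k)\in\EE_A}\bx_i^{jk}>[\ty_i]_j$ for every $j\in\Rset$, completing the proof.
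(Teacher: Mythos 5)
Your proof is correct, but it is organized differently from the paper's. The paper's argument is a short domination argument: it observes that, by the flow and subtour-elimination constraints, $\max_{(x_i,B_i,Q_i)\in Z_i}\sum_{k:(j,k)\in\EE_A}x_i^{jk}=1$ for every $j\in\Rset$, asserts (without explicit construction) that a single route in $Z_i$ passing through \emph{all} locations attains this maximum simultaneously at every $j$, and then uses linearity to equate the maximum over $Z_i$ with the maximum over $\conv{Z_i}$, so that this route's flow dominates $\sum_{k:(j,k)\in\EE_A}\tx_i^{jk}>[\ty_i]_j$ componentwise --- no case analysis and no intermediate bound on $\ty_i$ are needed. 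You instead first extract $[\ty_i]_j<1$ from the hull bound, split $\Rset$ by the sign of $[\ty_i]_j$, and only require a route covering the pairing closure $\hat S$ of the nonnegative components, which you then construct explicitly (pickup immediately followed by its delivery, cumulative service times, capacity checked via $C_i\ge\max_{j\in\Rset}q^j$). What your version buys is twofold: it makes explicit the feasibility verification that the paper leaves as an assertion (``possibly at a high cost''), and by covering only $\hat S$ rather than all of $\Rset$ it adapts directly to the heterogeneous extension where a robot cannot serve every request --- precisely the ``minor precaution'' the paper says this lemma requires there. What the paper's version buys is brevity and the avoidance of the constructive bookkeeping. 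One shared caveat: both arguments implicitly need the initial load $Q_i^{\text{init}}$ to be small enough (e.g., zero) for the covering route to respect~\eqref{eq:PDVRP_local_con6} during pickups, since $C_i\ge\max_{j\in\Rset}q^j$ alone does not guarantee $Q_i^{\text{init}}+q^j\le C_i$; your write-up is no less rigorous than the paper on this point, and otherwise every step of your reduction (the $\{0,1\}$ flow dichotomy, preservation of the bound over the convex hull, and the strictness for $[\ty_i]_j<0$) is sound.
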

\begin{proof}
Fix a robot $i \in \agents$ and note that, because of the
  flow constraints~\eqref{eq:PDVRP_local_con3} and the subtour elimination
  constraints~\eqref{eq:PDVRP_local_nonlcon_1}, for all $j \in \Rset$ it holds
  \begin{align}
    \max_{(x_i, B_i, Q_i) \in Z_i} \bigg( \sum_{k:(j,k) \in \EE_A} x_i^{jk} \bigg) = 1.
  \label{eq:max_local_coupling}
  \end{align}
  Moreover, note that it is possible to choose a local solution passing through all the
  locations (possibly at a high cost), i.e., there exists $(\bx_i, \bB_i, \bQ_i) \in Z_i$
  such that $\sum_{k:(j,k) \in \EE_A} \bx_i^{jk} = 1$ for all $j \in \Rset$.
  Therefore, for all $j \in \Rset$ it holds
  \begin{align}
    &\sum_{k:(j,k) \in \EE_A} \bx_i^{jk}
    \nonumber
    \\
    &\hspace{0.2cm}= \max_{(x_i, B_i, Q_i) \in Z_i} \bigg( \sum_{k:(j,k) \in \EE_A} x_i^{jk} \bigg)
    \nonumber
    \\
    &\hspace{0.2cm}\stackrel{(a)}{=}  \max_{(x_i, B_i, Q_i) \in \conv{Z_i}} \: \bigg( \sum_{k:(j,k) \in \EE_A} x_i^{jk} \bigg)
    \nonumber
    \\
    &\hspace{0.2cm}\ge \sum_{k:(j,k) \in \EE_A} x_i^{jk} \hspace{0.5cm} \text{ for all } (x_i, B_i, Q_i) \in \conv{Z_i},
  \end{align}
  where \emph{(a)} follows by linearity of the cost.
  In particular, the previous inequality holds with $(x_i, B_i, Q_i) = (\tx_i, \tB_i, \tQ_i)$ and thus
  for all $j \in \Rset$ we have
  $\sum_{k:(j,k) \in \EE_A} \bx_i^{jk} \ge \sum_{k:(j,k) \in \EE_A} \tx_i^{jk} > [\ty_i]_j$.
\end{proof}

\begin{lemma}
\label{lemma:optimal_allocation_characterization}
  Let $(\cy_1, \ldots, \cy_N) \in \real^{N|\Rset|}$ be an optimal allocation associated with
	problem~\eqref{eq:convexified_pb}, i.e., a vector satisfying $\sum_{i=1}^N \cy_i = \delta \1$
	and $\sum_{k:(j,k) \in \EE_A} x_i^{jk} \geq [\cy_i]_j$ for all $j \in \Rset$ and $i \in \agents$.
  Then, $\cy_i \le \1$ for all $i \in \agents$.
\end{lemma}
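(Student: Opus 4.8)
The plan is to prove the componentwise bound $[\cy_i]_j \le 1$ for every robot $i \in \agents$ and every request $j \in \Rset$, which is exactly the claim $\cy_i \le \1$. The central observation is that, by the very definition of an optimal allocation, the optimal solution $(\cz_1, \ldots, \cz_N)$ of problem~\eqref{eq:convexified_pb}, with each $\cz_i = (x_i, B_i, Q_i) \in \conv{Z_i}$, satisfies $\sum_{k:(j,k) \in \EE_A} x_i^{jk} \ge [\cy_i]_j$ for all $j \in \Rset$. Hence it suffices to bound this left-hand side from above by $1$.

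To this end, I would reuse the argument already employed in the proof of Lemma~\ref{lemma:zero_restriction}. By the flow constraints~\eqref{eq:PDVRP_local_con3} and the subtour-elimination constraints~\eqref{eq:PDVRP_local_nonlcon_1}, equation~\eqref{eq:max_local_coupling} gives $\max_{(x_i, B_i, Q_i) \in Z_i} \big( \sum_{k:(j,k) \in \EE_A} x_i^{jk} \big) = 1$ for every $j \in \Rset$. Since the quantity being maximized is linear in the decision variables, maximizing it over the convex hull $\conv{Z_i}$ returns the same value $1$ (this is precisely step~\emph{(a)} in the proof of Lemma~\ref{lemma:zero_restriction}). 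Consequently, every point $(x_i, B_i, Q_i) \in \conv{Z_i}$ obeys $\sum_{k:(j,k) \in \EE_A} x_i^{jk} \le 1$ for all $j \in \Rset$.

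Applying this bound to the optimal solution $\cz_i \in \conv{Z_i}$ and chaining it with the defining inequality of the optimal allocation, for each $j \in \Rset$ I obtain $[\cy_i]_j \le \sum_{k:(j,k) \in \EE_A} x_i^{jk} \le 1$, which yields $\cy_i \le \1$ and completes the argument. I do not anticipate any genuine obstacle here: the statement is essentially a corollary of the maximal-coupling identity~\eqref{eq:max_local_coupling}. The only point requiring a little care is ensuring the upper bound holds over $\conv{Z_i}$ rather than merely over $Z_i$; this is guaranteed by the linearity of $\sum_{k:(j,k) \in \EE_A} x_i^{jk}$, which forces the maximum over the polytope $\conv{Z_i}$ to be attained at a point of $Z_i$.
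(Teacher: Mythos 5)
Your proof is correct and rests on the same key ingredients as the paper's: the identity~\eqref{eq:max_local_coupling} and the linearity argument (step \emph{(a)} of Lemma~\ref{lemma:zero_restriction}) showing that $\sum_{k:(j,k) \in \EE_A} x_i^{jk} \le 1$ holds over all of $\conv{Z_i}$. The paper merely packages this as a proof by contradiction that invokes Lemma~\ref{lemma:zero_restriction} as a black box, whereas you chain the inequalities directly; the mathematical content is identical.
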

\begin{proof}
  By contradiction, suppose that there is a component $j \in \Rset$
  for which $[\cy_i]_j > 1$. By assumption, we have $\sum_{k:(j,k) \in \EE_A} x_i^{jk} \ge [\cy_i]_j$.
  Using Lemma~\ref{lemma:zero_restriction}, we conclude that
  there exists $(\bx, \bB, \bQ) \in Z_i$ such that
  \begin{align*}
    \sum_{k:(j,k) \in \EE_A} \bx_i^{jk}
    \ge
    [\cy_i]_j
    > 1,
  \end{align*}
  which contradicts~\eqref{eq:max_local_coupling}.
\end{proof}

\subsection{Proof of Theorem~\ref{thm:finite_time_feasibility}}
\label{sec:proof_theorem}
  First, note that, by Lemmas~\ref{lemma:well_posed_conv} and~\ref{lemma:well_posed_milp},
  the algorithm is well posed.
  Moreover, by construction (cf. problem~\eqref{eq:alg_MILP})
  it holds $(\xend_i, \Bend_i, \Qend_i) \in Z_i$
  for all $i \in \agents$. %
  Therefore, all the local constraints \eqref{eq:PDVRP_local_con1} to~\eqref{eq:PDVRP_local_con8}
  are satisfied by $(\xend_i, \Bend_i, \Qend_i)$ and we only need to show
  that there exists $T_\delta > 0$ such that constraint~\eqref{eq:PDVRP_coupling_constr}
  is satisfied by $(\xend_i, \Bend_i, \Qend_i)$ %
  if $T_f \ge T_\delta$.
  By Lemma~\ref{lemma:rhs_relaxation}, it suffices to prove
  that $\sum_{i=1}^N \sum_{k:(j,k) \in \EE_A} x_i^{jk} > 0$ for all $j \in \Rset$.
  
  Consider the auxiliary sequence $\{y_1^t, \ldots, y_N^t\}_{t \ge 0}$
  generated by Algorithm~\ref{alg:algorithm}. By Lemma~\ref{lemma:DPD_convergence},
  this sequence converges to the vector $(\cy_1, \ldots, \cy_N)$.
  By definition of limit (using the infinity norm), there exists $T_{\delta} > 0$ such that
  $\Vert y_i^t - \cy_i \Vert_\infty < \delta/N$
  (and thus $y_i^t < \cy_i + \delta/N \1$)
  for all $i \in \agents$ and $t \ge T_\delta$.
  
  Let us define a vector $\rho_i \in \real^{|\Rset|}$ representing the mismatch
  between $y_i^{T_f}$ and its thresholded version $\yend_i$,
  \begin{align}
     \rho_i = y_i^{T_f} - \yend_i,
     \hspace{1cm}
     \text{for all } i \in \agents.
  \end{align}
  By definition~\eqref{eq:alg_rounding}, it holds $\rho_i \ge 0$.
  Then, for all $j \in \Rset$, it holds
  \begin{align*}
    \sum_{i=1}^N \sum_{k:(j,k) \in \EE_A} {\xend_i}^{jk}
    &\ge \sum_{i=1}^N [\yend_i]_j
    \\
    &= \underbrace{\sum_{i=1}^N [y_i^{T_f}]_j}_{\delta} - \sum_{i=1}^N [\rho_i]_j
    \\
    &= \delta - \sum_{i=1}^N [\rho_i]_j
  \end{align*}
  Let us temporarily assume that $\rho_i < \delta/N \1$ for all $i \in \agents$.
  Then, we obtain the desired statement
  \begin{align*}
    \sum_{i=1}^N \sum_{k:(j,k) \in \EE_A} {\xend_i}^{jk}
    &\ge \delta - \sum_{i=1}^N [\rho_i]_j
    \\
    &> \bigg( \delta - \sum_{i=1}^N \delta/N \bigg) = 0.
  \end{align*}
  It remains to show that $\rho_i < \delta/N \1$ for all $i \in \agents$.
  Fix a robot $i$ and consider a component $j \in \Rset$ of the vector $\rho_i$.
  Owing to the definition~\eqref{eq:alg_rounding} of $\yend_i$, either the $j$-th
  component is equal to $[y_i^{T_f}]_j$ or it is equal to $1$. In the former case,
  we have $[\rho_i]_j = 0 < \delta/N$. In the latter case, there is a non-negative
  mismatch $[\rho_i]_j = [y_i^{T_f}]_j - 1 \ge 0$.
  Now, using the fact $y_i^t < \cy_i + \delta/N \1$ for all $t \ge T_\delta$,
  we have
  \begin{align*}
    [\rho_i]_j
    &= [y_i^{T_f}]_j - 1
    \\
    &< [\cy_i]_j - 1 + \delta/N
    \\
    &\le \delta/N
  \end{align*}
  provided that $T_f \ge T_\delta$, where in the last inequality
  we applied Lemma~\ref{lemma:optimal_allocation_characterization}.
  The proof follows.
  \oprocend
\section{Discussion and Extension}

In this section, we provide guidelines for the choice of the algorithm
parameters and we discuss a possible extension of Algorithm~\ref{alg:algorithm}
to a more general setting.

\subsection{On the Choice of the Parameters}
\label{sec:discussion_parameters}
As already mentioned in Section~\ref{sec:distributed_algorithm}, there are a few parameters
that must be appropriately set in order for Algorithm~\ref{alg:algorithm} to work correctly.
The basic requirements for the parameters are summarized in Theorem~\ref{thm:finite_time_feasibility}
and are recalled here: \emph{(i)} $M > 0$ must be sufficiently large, \emph{(ii)} $\delta$ is any number
in the open interval $(0,1)$, \emph{(iii)} the total number of iterations $T_f > 0$ must be sufficiently
large, \emph{(iv)} the step-size sequence $\{\alpha^t\}_{t \ge 0}$ must satisfy Assumption~\ref{ass:step-size}.

The parameter $M > 0$ is an exact penalty weight (cf.~\cite{bertsekas1982constrained}).
The minimum admissible value is the $1$-norm of the dual solution
of problem~\eqref{eq:convexified_pb}. A conservative choice is any large number not creating
numerical instability when solving the local problems~\eqref{eq:alg_distributed_z_LP}.
The assumption on the step-size sequence $\{\alpha^t\}_{t \ge 0}$ is typical in the distributed
optimization literature. A sensible choice satisfying Assumption~\ref{ass:step-size} is
$\alpha^t = K/(t + 1)$, with any $K > 0$.

The purpose of the parameter $\delta$ is to enlarge the constraint~\eqref{eq:PDVRP_coupling_constr}
(see also Lemma~\ref{lemma:rhs_relaxation}) and is linked
to the minimum value of $T_f$, i.e., the minimum number of iterations
to guarantee feasibility (cf. Theorem~\ref{thm:finite_time_feasibility}).
There is an inherent tradeoff between $\delta$
and the minimal $T_f$. For $\delta$ close to $1$, precedence is given to feasibility
and $T_f$ may become smaller, while for $\delta$ close to $0$, solution optimality
is prioritized, possibly at the cost of a higher $T_f$.
Indeed, for $\delta$ close to $1$, the time
$T_\delta$ in the proof of Theorem~\ref{thm:finite_time_feasibility} may be
smaller, and therefore a smaller number of iterations $T_f$ may be sufficient to
attain feasibility of the solution. Instead, for $\delta$ close to $0$,
a greater number of iterations may be required to obtain feasibility.
However, in the latter case there could be less robots for which the
components of $y_i^{T_f}$ are positive (because, by Lemma~\ref{lemma:DPD_convergence},
it must hold $\sum_{i=1}^N y_i^{T_f} = \delta\1$ with a small positive $\delta$),
thus forcing less robots to pass through the same location.
In Section~\ref{sec:simulations}, we perform simulations in order to study
the trade-off between $\delta$ and $T_f$ numerically.

\subsection{Extension to Heterogeneous PDVRP Graphs}

Let us outline a possible extension of problem~\eqref{eq:PDVRP} that can be
handled by Algorithm~\ref{alg:algorithm}.
Recall from Section~\ref{sec:cost_constraint_description} that problem~\eqref{eq:PDVRP}
has the implicit assumption that $C_i\geq\max_{j \in R}\{q^j\}$ for all $i \in \agents$,
where $C_i$ is the capacity of vehicle $i$ and $q^j$ is the demand/supply at location
$j$. In real scenarios, while there may be vehicles potentially capable of performing all
the pickup/delivery requests, it is often the case that many vehicles are small sized
and can only accomplish a subset of the task requests. This means that the assumption
$C_i\geq\max_{j \in R}\{q^j\}$ may not hold for some robots.

The general case just outlined can be handled with minor modifications in the
formulation of problem~\eqref{eq:PDVRP} and in the algorithm. Indeed, if
$C_i < q^j$ for some robot $i$ and some location $j$, the PDVRP~\eqref{eq:PDVRP}
would be unsolvable by construction (due to infeasibility).
Thus, for each robot $i \in \agents$ we define the largest \emph{local}
set of requests $\Rset_i \subseteq \Rset$ such that
$C_i \geq q^j$ for all $j \in \Rset_i$. Following the description in
Section~\ref{sec:optimization_problem}, the sets $\Rset_i$ will now induce \emph{local}
graphs ${\GG_A}_i=({\Vset}_i,{\EE_A}_i)$ of possible paths,
with vertex set ${\Vset}_i=\{s,\sigma\}\cup \Rset_i$
and edge set ${\EE_A}_i=\{(j,k)\mid j,k\in {\Vset}_i, j\neq k \text{ and } j\neq \sigma, k\neq s \}$.
Then, each robot $i$ defines a smaller set of optimization variables $x_i^{jk}$ with $j,k \in {\Vset}_i$
(instead of $j,k \in \Vset$) and similarly for $B_i$ and $Q_i$.
The optimization problem is formulated similarly to problem~\eqref{eq:PDVRP}
by dropping all the references to non-existing optimization
variables. The resulting modified version of problem~\eqref{eq:PDVRP} is now feasible
as long as for all $j \in \Rset$ there exists $i \in \agents$ such that $C_i \geq q^j$
(i.e., each task can be performed by at least one robot).

The distributed algorithm also requires minor modifications. In particular, the summations
in problems~\eqref{eq:alg_distributed_z_LP} and~\eqref{eq:alg_MILP} are performed using
${\EE_A}_i$ in place of $\EE_A$. Moreover, the thresholding operation~\eqref{eq:alg_rounding}
is replaced by the following one
\begin{align*}
  [\yend_i]_j =
  \begin{cases}
    \min\big( [y_i^{T_f}]_j, 1 \big) & \text{if } j \in \Rset_i
    \\
    \min\big( [y_i^{T_f}]_j, 0 \big) & \text{otherwise}
  \end{cases}
\end{align*}
for all $j \in \Rset$. Finally, the sets $Z_i$ must be replaced by the new version
of the constraints~\eqref{eq:PDVRP_local_con1}--\eqref{eq:PDVRP_local_con8}
with ${\EE_A}_i$ in place of $\EE_A$.
It is possible to follow essentially the same line of proof outlined in
Section~\ref{sec:analysis} with only minor precautions in
Lemmas~\ref{lemma:well_posed_milp} and~\ref{lemma:zero_restriction},
by which it can be concluded that Theorem~\ref{thm:finite_time_feasibility}
holds with no changes.

\section{Simulations on Gazebo}
\label{sec:simulations}
In this section, we provide simulation results for the proposed distributed algorithm
for teams of TurtleBot3 Burger ground robots that have to serve a set of pickup and
delivery requests scattered in the environment. All the simulations are performed using the
\textsc{ChoiRbot}~\cite{testa2020choirbot} ROS~2 framework. We first describe
how we integrate the proposed distributed scheme in \textsc{ChoiRbot}
and then we show simulation results.

\subsection{Simulation Set-up}
In the \textsc{ChoiRbot} software architecture,
each robot is modeled as a cyber-physical agent and consists
of three interacting layers:
distributed optimization layer, trajectory planning layer and low-level control layer.
The \textsc{ChoiRbot} architecture is based on the novel ROS~2 framework,
which handles inter-process communications via the TCP/IP stack.
This allows us to implement the proposed distributed scheme on a real WiFi  network
in which robots communicate with few neighbors according to a given graph.
Robots are simulated in the Gazebo environment~\cite{koenig2004design},
which provides an accurate estimation of the TurtleBot dynamics.
The resulting simulations
are so accurate that the experiments performed in the next section are obtained without any
change or tuning in the code. In this sense, the proposed results are
comparable to experimental results on a real team of robots.
In Figure~\ref{fig:gazebo}, we show a snapshot of one of the simulations
addressed in the next section.
Due to packet losses in the ROS~2 communication middleware with a large number of nodes on a single machine, we were not able to run simulations with more than $30$ robots.
\begin{figure}[htbp]
\centering
  \includegraphics[width=.95\columnwidth]{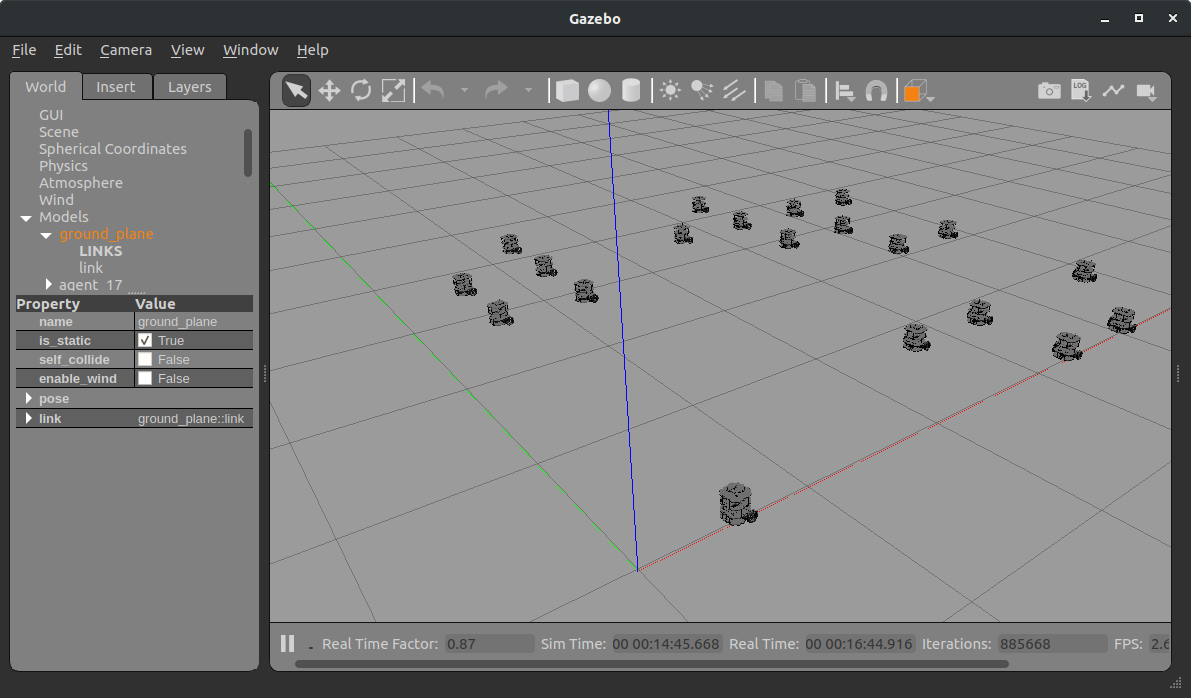}
  \caption{Snapshot of the initial condition of one of the Gazebo simulations.}
  \label{fig:gazebo}
\end{figure}

We now describe more in detail the components of the proposed architecture. 
As said, the distributed optimization layer handles the cooperative solution of the pickup
and delivery problem. It consists of a set of optimization processes,
one for each robot, that perform the steps of
Algorithm~\ref{alg:algorithm}.
At the beginning of the simulation, each optimization node
gathers the information on the pickup and delivery
requests and evaluates the cost vector $c_i$ (i.e. the robot-to-task distances)
and the local constraint sets $Z_i$.
We stress that these computations are performed independently for each
robot on different processes, without having access to the other robot information.
After the initialization, robots
start communicating and performing the steps of the distributed algorithm
proposed in Section~\ref{sec:alg_description}.
To implement Algorithm~\ref{alg:algorithm}, we used the \textsc{disropt}
Python package~\cite{farina2019disropt}, which provides the needed features to
encode the algorithm steps and is compatible with the \textsc{ChoiRbot} framework (see
also~\cite{testa2020choirbot}).
As soon as the distributed optimization procedure completes,
robots start moving towards the assigned tasks.
Due to constraint~\eqref{eq:PDVRP_coupling_constr},
suboptimal solutions of problem~\eqref{eq:PDVRP} may lead
more than one robot to perform the same task.
Thus, the robot communicates to a so-called
\textsc{Auth} node that it wants to start a particular task.
If the task has been already taken care of by another robot,
the \textsc{Auth} node denies authorization and the robot
performs the next one.
In such a way, redundant assignments are avoided.
The target positions are then communicated to the local trajectory
planning layer and then fed to the low-level controllers to steer
the robots over the requests positions.
The controller nodes interact with Gazebo, which simulates the robot
dynamics and provides the pose of each robot.

We have experimentally found that a satisfying tuning of the distributed algorithm
is as follows. The robots perform $250$ iterations with local allocation initialized 
as in Algorithm~\ref{alg:algorithm}.
For the first $125$ iterations they use the diminishing step size
$\alpha^t = 0.005/(t + 1)$, then they use a constant step size (equal to the last computed
one). Because of the constant step size, the final allocation fed to the thresholding
operation~\eqref{eq:alg_rounding} is the running average computed from iteration
$126$ on, i.e.
\begin{align*}
	\bigg( \sum_{\tau=126}^{250} \alpha^\tau y_i^\tau \bigg) /
	\bigg( \sum_{\tau=126}^{250} \alpha^\tau \bigg)
\end{align*}
This particular tweaking allows the robots to quickly converge to a good-quality
solution.

\subsection{Results}
We performed $3$ Monte Carlo simulations on random
instances of problem~\eqref{eq:PDVRP} on the described platform
with TurtleBot3 robots. We test the behavior of the proposed scheme when both the number of requests $|R|$ and the number of robots $N$ are varied. In this way it is possible to assess the performance of the algorithm if $|R| > N$ or if $|R| < N$.

\emph{First simulation.}
To begin with, we test optimality of the solution computed
by the algorithm while varying the number of robots $N$.
We perform $50$ Monte Carlo trials for each value of $N$
and we fix $\delta = 0.1$ to prioritize optimality over feasibility
(cf. Section~\ref{sec:discussion_parameters}).
For each trial, $10$ pickup requests and $10$
corresponding deliveries are randomly generated on the plane.
In Figure~\ref{fig:montecarlo_cost_error}, we show the cost error
of the solution actuated by robots after $250$ iterations of the
distributed algorithm (in blue),
compared to the cost of a centralized solver,
with varying number of robots. The distributed algorithm achieves an
average $30-40\%$ suboptimality.

\begin{figure}[htbp]\centering
  \hspace{2cm}
  \includegraphics[scale=1]{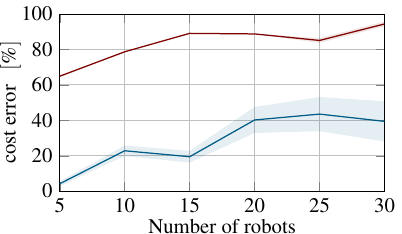}
  \caption{Cost error in Monte Carlo simulations on Gazebo for
    varying number of robots. Blue: proposed approach. Red: baseline approach. The shaded areas represent one standard deviation.}
  \label{fig:montecarlo_cost_error}
\end{figure}

\emph{Comparison with distributed baseline approach.}
The solutions are confronted
with a distributed greedy market-based algorithm as follows.
Upon receiving the task requests
and filtering out those that cannot be performed due to
insufficient load capacity, each robot initially self-assigns the
pickup task closest to its position together with the corresponding
delivery. Then, it self-assigns the next pickup task closest to the last
delivery location, and so on until the task list is empty. To
remove ties, robots compute the cost of performing each pickup-delivery
pair, and then perform a min-consensus algorithm to decide the
robot with the lower cost.
This algorithm is run up to its convergence to the best attainable value.
The cost error obtained with this approach is
depicted in Figure~\ref{fig:montecarlo_cost_error} (in red).
From the figure, it emerges that our approach outperforms the baseline
since it achieves lower suboptimality levels.

\emph{Second simulation.}
Now, we assess the behavior of the cost error while varying
the total number of requests. Specifically, we consider a team of
$20$ robots and we let the number of requests $|R|$ vary from
$4$ to $24$ (with $\delta = 0.9$).
For each of these values of $|R|$, we perform $50$ trials
and we let the robots implement the solution after $250$ iterations
of the distributed algorithm.
The results are depicted in Figure~\ref{fig:task_cost_error} and
Figure~\ref{fig:cost_comparison}.
Notably, as it can be seen from Figure~\ref{fig:task_cost_error},
the mean relative error remains constant while increasing the number
of requests. This is an appealing feature of the proposed strategy
considering the fact that, as depicted also in Figure~\ref{fig:cost_comparison},
the global optimal cost increases with the number of tasks.

\begin{figure}[htbp]\centering
  \includegraphics[scale=1]{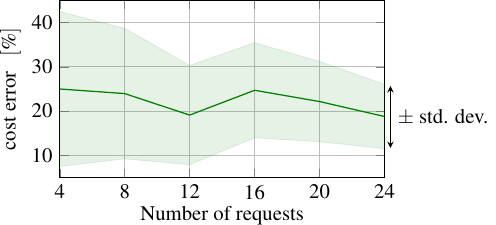}
  \caption{Cost error in Monte Carlo simulations on Gazebo for
    varying number of requests.}
  \label{fig:task_cost_error}
\end{figure}
\begin{figure}[htbp]\centering
  \includegraphics[scale=1]{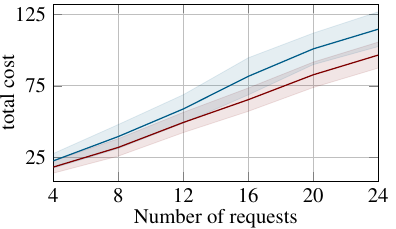}
  \caption{Comparison between the centralized optimal solution (red) and the one found by the proposed distributed strategy (blue).}
  \label{fig:cost_comparison}
\end{figure}

\emph{Third simulation.}
Finally, we perform simulations to determine the
number of iterations needed to achieve finite-time feasibility
while varying the number of robots $N$ and the value of $\delta$.
We employed three values of $\delta$, namely $0.1$, $0.5$,
$0.9$, and performed $50$ trials for each of the values
$N = 5, 10, 15, 20, 25, 30$ and for each value of $\delta$.
In each trial, we performed $250$ iterations of the algorithm
and recorded the value of the coupling constraint.
Then we determined the following quantity
\begin{align*}
  \min \: & \: t
  \\
  \text{such that} \: & \: \sum_{i=1}^N \sum_{k:(j,k) \in \EE_A} {x_i^{jk}}^\tau \geq 1
    \hspace{1cm} \text{for all } \tau \ge t,
\end{align*}
which is essentially an empirical value of $T_\delta$ appearing
in Theorem~\ref{thm:finite_time_feasibility}.
Interestingly, we found out that, for all the trials, the empirical
value of $T_\delta$ is zero, which means that in all the simulated
scenarios the algorithm provides a feasible solution to the
PDVRP~\eqref{eq:PDVRP} since the first iteration.

\section{Experiments}
\label{sec:experiments}

To conclude, we show experimental results on real teams of robots solving
PDVRP instances. We first performed a small benchmark experiment to assess
the performance of the algorithm and then we perform more complex
experiments to showcase how it can be implemented on large fleets of robots.

\subsection{Benchmark Experiment}
In this experiment, we consider a fleet of $4$ TurtleBot3 burger ground robots that have to serve $4$ pickup tasks and their corresponding $4$ deliveries. In this set-up, robots navigate in a cluttered environment containing obstacles. Robots start from initial positions on a line. Pickup and delivery tasks are generated on two different lines, so as to clearly distinguish the optimal robot-to-task assignment. To simulate the pickup/delivery procedure, each robot waits over the request location a random service time $d^j$ between $3$ and $5$ seconds. The capacity
$C_i$ of each robot and the demand/supply $q^j$ of tasks are drawn
from uniform distributions. The velocity of robots is approximately $0.2\, \mathrm{m}/\mathrm{s}$.
As regards the low-level controllers, we use a linear state feedback
for single integrators. In this way, we can handle collision among
robots and with obstacles via barrier functions using the approach described in~\cite{wilson2020robotarium}.
Then, in order to get the unicycle inputs, we utilize a near-identity
diffeomorphism (see~\cite{wilson2020robotarium}).

In Figure~\ref{fig:pdvrp_experiment_benchmark}, we show a snapshot of the experiment set-up.
We run the distributed algorithm for $1000$ iterations and record the robot-to-task assignment and the cost along the algorithm evolution.
Figure~\ref{fig:pdvrp_experiment_benchmark_analysis} reports the experiment analysis, while in Table~\ref{tb:results}, we include a comparison of the allocation found by the proposed
distributed algorithm and the one found via a centralized solver.
The figure highlights that, as the algorithm evolves, the cost of the overall robot-to-task assignment decreases, and after a certain number of iterations the computed solution becomes conflict free (i.e., only one robot is assigned to each task). As it can be seen from the table, the solution computed by the distributed algorithm coincides with the optimal (centralized) solution.
A video is also available as supplementary material to the paper.\footnote{The video can be also found at \texttt{\url{https://youtu.be/tdtGNftdbng}}.}

\begin{figure}[htbp]
\centering
 \includegraphics[width=.7\columnwidth]{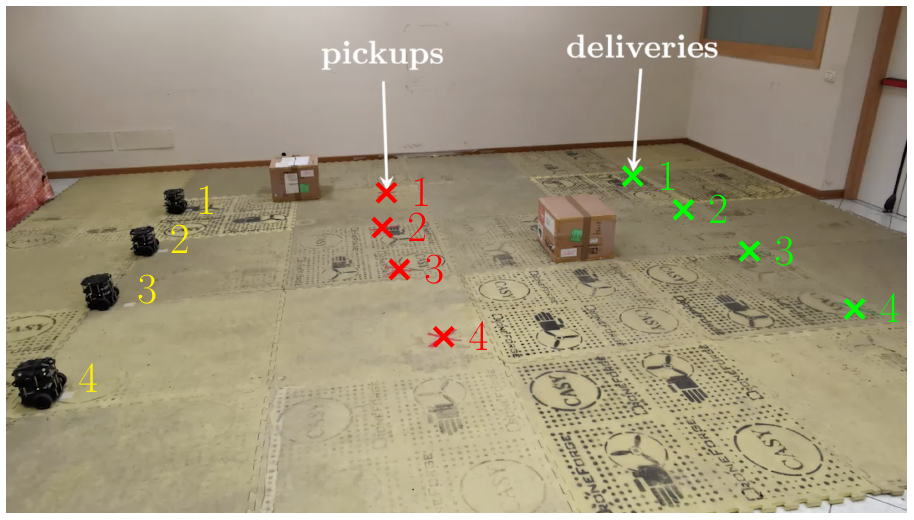}
   \caption{Benchmark experiment for the \ac{PDVRP} problem. Robots start on a line. Pickups locations are denoted with red crosses, while deliveries are denoted with green crosses.}
  \label{fig:pdvrp_experiment_benchmark}
\end{figure}

\begin{figure}[htbp]
\centering

\includegraphics[width=.8\columnwidth]{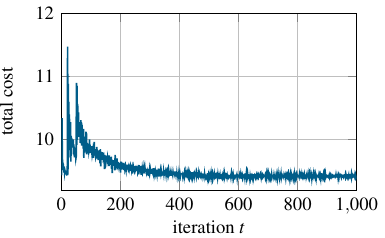}

\vspace{0.2cm}

\includegraphics[width=.8\columnwidth]{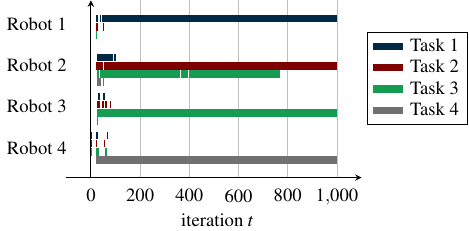}

   \caption{Analysis of benchmark experiment. Top: value of the cost along the algorithmic evolution, computed from problem~\eqref{eq:alg_distributed_z_LP}. Bottom: assignment of robot to tasks along the algorithmic evolution.}
  \label{fig:pdvrp_experiment_benchmark_analysis}
\end{figure}

\begin{table}[h]
\footnotesize
	\begin{center}
	\caption{Experiment analysis: final assignments}\label{tb:results}
\begin{tabular}{c|c|c}
 Robot id & Distributed solution & Optimal solution \\
 \hline
 1 & [1]    & [1]\\
 2 & [2]    & [2]\\
 3 & [3]    & [3]\\
 4 & [4]    & [4]\\
 \hline
\end{tabular}
\end{center}
\end{table}

\subsection{Large Experiments}
We consider heterogeneous teams composed 
by Crazyflie nano-quadrotors
and TurtleBot3 Burger mobile robots.
Tasks are generated randomly in the space. In particular, we split the
experiment area in two halves. Pickup requests are located in the right half,
while deliveries are in the left half. 
Each robot can serve a subset of the pickup/delivery requests. This is
decided randomly at the beginning of the experiment. The velocity of robots (both ground and aerial)
is approximately $0.2\, \mathrm{m}/\mathrm{s}$.
The solution mechanism is the same used in the
simulations.

As regards the low-level controllers of nano-quadrotors,
a hierarchical controller has been considered.
Specifically, a flatness-based position controller generates desired angular
rates that are then actuated with a low-level PID control loop.
The position controller receives as input a sufficiently smooth position
trajectory, which is computed as a polynomial spline.

We performed two different experiments. In the first one, there
are $3$ ground robots and $2$ aerial robots that must serve a total of
$5$ pickups and $5$ deliveries. In Figure~\ref{fig:pdvrp_experiment_1},
we show a snapshot from the experiment.
Then we performed a second, larger experiment with $7$ ground robots and
$2$ aerial robots that must serve $10$ pickups and $10$ deliveries.
In Figure~\ref{fig:pdvrp_experiment_2}, we show snapshots from the second experiment.
A video is also available as supplementary material to the paper.\footnote{The video can be also found at \texttt{\url{https://youtu.be/NwqzIEBNIS4}}.}

\begin{figure}[htbp]
\centering
 \includegraphics[width=.9\columnwidth]{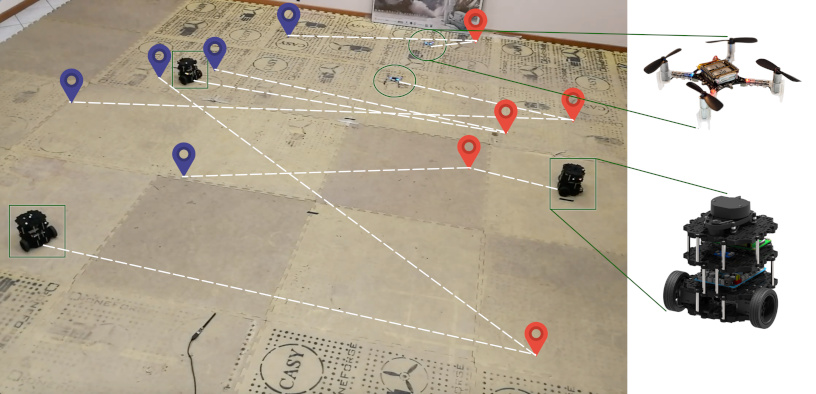}
   \caption{First experiment with ground and aerial robots for the
    \ac{PDVRP} problem.
    Ground robots are indicated with
    a square, while aerial robots are delimited with circles.
    The red pins represent pickups, while the blue ones represent deliveries.
    The paths travelled by robots are depicted as dashed lines.}
  \label{fig:pdvrp_experiment_1}
\end{figure}

\begin{figure}[htbp]
\centering
   \includegraphics[width=.49\columnwidth]{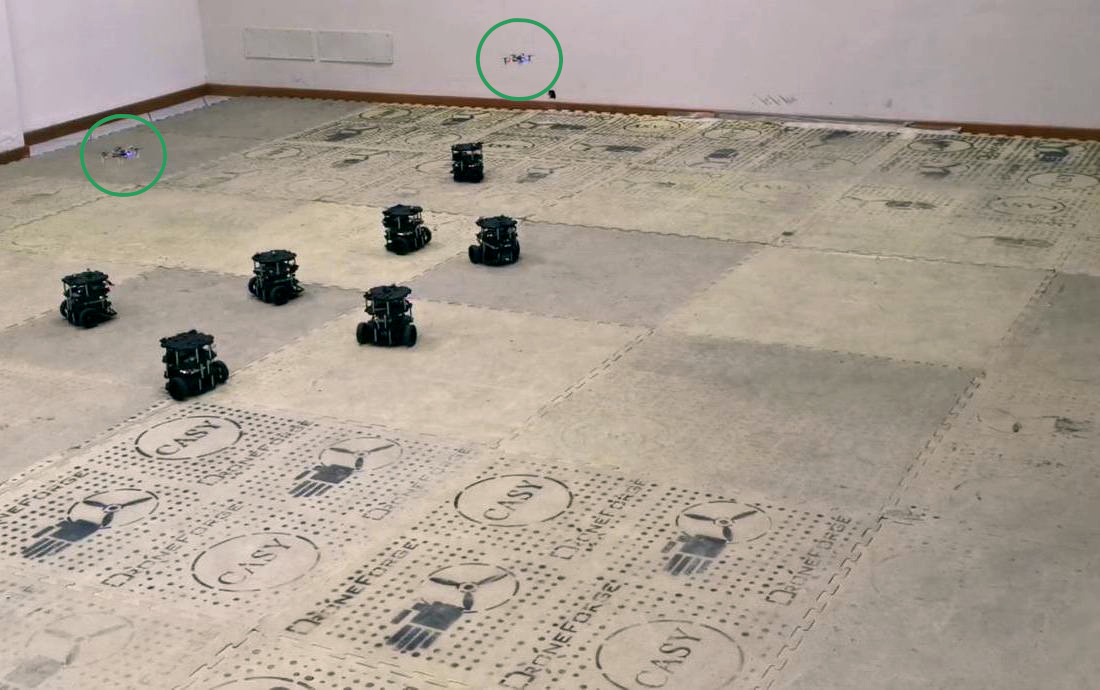}\hfill
   \includegraphics[width=.49\columnwidth]{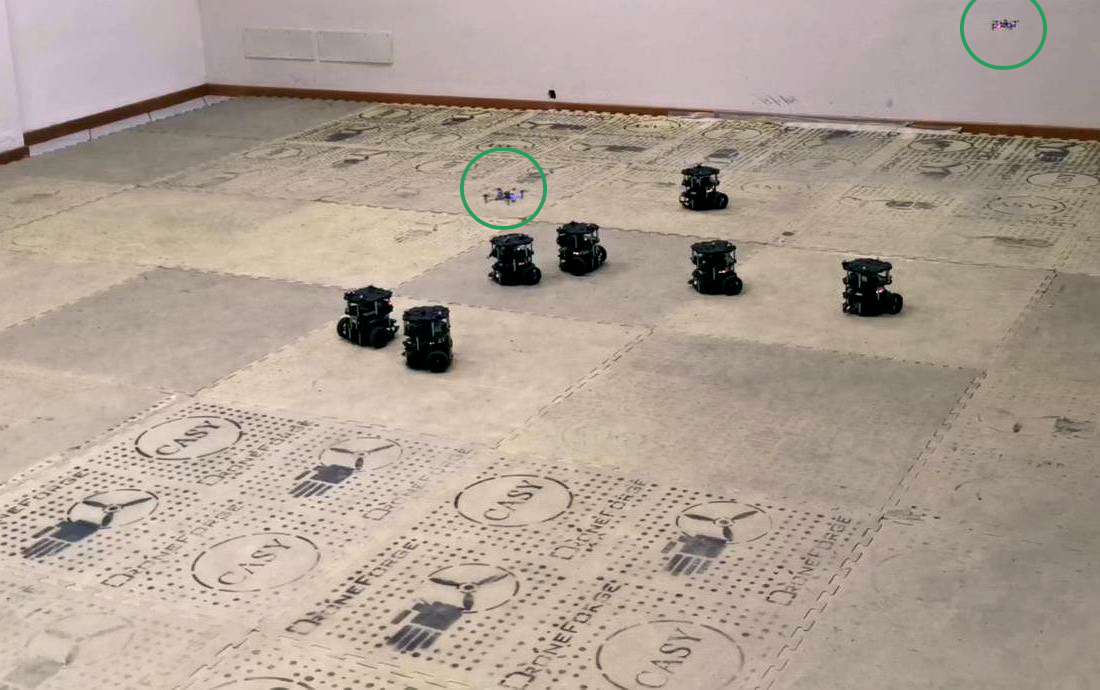}
   \caption{Snapshots from the second experiment. Left: robots have reached
     the pickup positions and perform the loading operation (simulated).
     Right: robots have reached the delivery positions and have terminated
     the mission.}
  \label{fig:pdvrp_experiment_2}
\end{figure}

\section{Conclusions}
\label{sec:conclusions}

In this paper, we introduced a purely distributed scheme to address
large-scale instances of the Pickup-and-Delivery Vehicle Routing Problem
in networks of cooperating robots. %
The proposed distributed algorithm
is shown to provide a feasible solution in a finite number of communication
rounds and has remarkable scalability and privacy-preserving properties
that allow for large robotic networks. %
The theoretical results are corroborated on a set of %
realistic simulations %
through a combined ROS~2 / Gazebo platform.
Finally, experimental results on a real testbed highlight feasibility of the
proposed solution for a heterogeneous team of ground and aerial robots.
As a future line of research, a more complex set-up could be investigated. Indeed, variables as travel cost, service duration, travel time could be considered to be stochastic in order to obtain a more realistic model. Moreover, the approach can be enhanced to force assignments of single robots to each task and a convergence rate analysis can be performed.

\appendices
\section{Conversion to Mixed-Integer Linear Program}\label{sec:conv_lin}

Problem~\eqref{eq:PDVRP} is almost a mixed-integer linear program,
except for the fact that the constraints~\eqref{eq:PDVRP_local_nonlcon_1} and~\eqref{eq:PDVRP_local_nonlcon_2}
are nonlinear. However, from a computational point of view, the
constraints~\eqref{eq:PDVRP_local_nonlcon_1} and~\eqref{eq:PDVRP_local_nonlcon_2} can be
readily recast as linear ones.
To achieve this, we use a standard procedure
(see~\cite{cordeau2006branch,bemporad1999control}) that can be
summarized as follows.

First, we introduce for each $i,j,k$ the constraints
$0 \leq B_i^j\leq \overline{B}_i$, where $\overline{B}_i \ge 0$ is
any conservative upper bound on the total travel time of vehicle $i$
selected so as to preserve the solutions of the original problem~\footnote{A simple possibility is to select $\overline{B}_i$ as the sum
of all possible travel times $t_{i}^{jk}$ from all $i$ to all $j$,
plus the service times $d^j$ for all $j$.}.
While this operation does not affect the problem, it introduces a bound
on the value of each $B^j$. After defining for all $i,j,k$ the scalars $M_{i}^{jk} = \overline{B}_i + d^j + t_i^{jk}$
(or any larger number), the nonlinear constraint~\eqref{eq:PDVRP_local_nonlcon_1}
can be replaced with the linear one
\begin{align}
  B_i^k \ge B_i^j+d_j+t_i^{jk} - M_{i}^{jk} (1 - x_i^{jk}).
\label{eq:linearized_constraint_B}
\end{align}
Equivalence of the constraint~\eqref{eq:PDVRP_local_nonlcon_1} with~\eqref{eq:linearized_constraint_B}
can be verified by noting that, for $x_i^{jk} = 1$,
we obtain the desired constraint $B_i^k \ge B_i^j+d_j+t_i^{jk}$, while for $x_i^{jk} = 0$
the constraint~\eqref{eq:linearized_constraint_B} becomes $B_i^k \ge B_i^j+d_j+t_i^{jk} - M_{i}^{jk}$
(which is already implied by the constraints $B_i^k \ge 0$ and $B_i^j \ge 0$).

A similar reasoning can be applied to turn the constraint~\eqref{eq:PDVRP_local_nonlcon_2} into a linear one.
Let us define $\overline{W}_i^{jk} = \overline{Q}^j + q^k$ and
$\underline{W}_i^{jk} = \overline{Q}^k - q^k - \underline{Q}^j$
(or any larger number), then constraint~\eqref{eq:PDVRP_local_nonlcon_2} can be
equivalently replaced with the pair of linear constraints
\begin{subequations}
\label{linearized_constraint_Q}
\begin{align}
  Q_i^k &\ge Q_i^j+q^k - \overline{W}_{i}^{jk} (1 - x_i^{jk}),
  \label{linearized_constraint_Q_1}
  \\
  Q_i^k &\le Q_i^j + q^k + \underline{W}_{i}^{jk} (1 - x_i^{jk}).
  \label{linearized_constraint_Q_2}
\end{align}
\end{subequations}
After introducing the additional constraints $0 \leq B_i^j\leq \overline{B}_i$
for all $i,j,k$ and replacing~\eqref{eq:PDVRP_local_nonlcon_1}--\eqref{eq:PDVRP_local_nonlcon_2}
with their equivalent versions~\eqref{eq:linearized_constraint_B}--\eqref{linearized_constraint_Q},
problem~\eqref{eq:PDVRP} becomes a MILP.

\section{Primal Decomposition}
\label{sec:primal_decomp}
Consider a network of $N$ agents indexed by $\agents = \until{N}$
that aim to solve a linear program of the form
\begin{align}
\begin{split}
  \min_{x_1, \ldots, x_N} \: & \: \sum_{i=1}^N c_i^\top x_i
  \\
  \subj \: & \: x_i \in X_i, \hspace{1cm} \forall i \in \agents,
  \\
  & \: \sum_{i=1}^N A_i x_i \le b,
\end{split}
\label{eq:app_LP}
\end{align}
where each $x_i \in \real^{n_i}$ is the $i$-th optimization variable,
$c_i \in \real^{n_i}$ is the $i$-th cost vector, $X_i \subset \real^{n_i}$
is the $i$-th polyhedral constraint set and $A_i \in \real^{S \times n_i}$
is a matrix for the $i$-th contribution to the \emph{coupling constraint}
$\sum_{i=1}^N A_i x_i \le b \in \real^S$. Problem~\eqref{eq:app_LP}
enjoys the constraint-coupled structure~\cite{notarstefano2019distributed}
and can be recast into a master-subproblem architecture
by using the so-called \emph{primal decomposition} technique~\cite{silverman1972primal}.
The right-hand side vector $b$ of the coupling constraint
is interpreted as a given (limited) resource to be shared among the
network agents.
Thus, local \emph{allocation vectors} $y_i \in \real^S$ for all $i$
are introduced such that $\sum_{i=1}^N y_i = b$.
To determine the allocations, a \emph{master problem} is introduced
\begin{align}
\begin{split}
  \min_{y_1,\ldots,y_N} \: & \: \sum_{i =1}^N p_i (y_i) 
  \\
  \subj \: & \: \smallsum_{i=1}^N y_i = b
  \\
  & \: y_i \in Y_i, \hspace{1cm} \forall i \in\agents,
\end{split}
\label{eq:app_primal_decomp_master}
\end{align}
where, for each $i\in\agents$, the function $\map{p_i}{\real^S}{\real}$ is
defined as the optimal cost of the $i$-th (linear programming) \emph{subproblem}
\begin{align}
\begin{split}
  p_i(y_i) = \: \min_{x_i} \: & \: c_i^\top x_i
  \\
  \subj \: 
  & \: A_i x_i \leq y_i
  \\
  & \: x_i \in X_i.
\end{split}
\label{eq:app_primal_decomp_subproblem}
\end{align}
In problem~\eqref{eq:app_primal_decomp_master}, the new constraint set
$Y_i \subseteq\real^S$ is
the set of $y_i$ for which
problem~\eqref{eq:app_primal_decomp_subproblem} is feasible, i.e.,
such that there exists $x_i \in X_i$ satisfying the local
\emph{allocation constraint} $A_i x_i \le y_i$.
Assuming problem~\eqref{eq:app_LP} is feasible and $X_i$ are compact sets,
if $(y_1^\star, \ldots, y_N^\star)$ is an optimal solution
of~\eqref{eq:app_primal_decomp_master} and, for all $i$,
$x_i^\star$ is optimal for~\eqref{eq:app_primal_decomp_subproblem}
(with $y_i = y_i^\star$), then $(x_1^\star, \ldots, x_N^\star)$
is an optimal solution of the original problem~\eqref{eq:app_LP}
(see, e.g.,~\cite[Lemma 1]{silverman1972primal}).

\section{Proofs}

\subsection{Proof of Lemma~\ref{lemma:well_posed_conv}}
\label{sec:proof_lemma_well_posed_conv}
  Note that problem~\eqref{eq:alg_distributed_z_LP} is the epigraph form of
  \begin{align*}
  \begin{split}      
    \min_{x_i, B_i, Q_i} \: &\: \sum_{(j,k) \in \EE_A} c_i^{jk} x_i^{jk} 
    \\
    & \hspace{1cm} + M \max\bigg\{ 0,  \max_{j \in \Rset} \bigg( [y_i^t]_j - \!\!\sum_{k:(j,k) \in \EE_A} \!\! x_i^{jk} \bigg) \!\bigg\}
    \\
    \subj \: & \: (x_i, B_i, Q_i) \in \conv{Z_i}
  \end{split}
  \end{align*}
  Moreover, it holds $\conv{Z_i} \supset Z_i$.
  Therefore, the proof follows since $Z_i$ is not empty (by assumption).
  \oprocend

\subsection{Proof of Lemma~\ref{lemma:well_posed_milp}}
\label{sec:proof_lemma_well_posed_milp}
  Fix a robot $i$. Because of the thresholding operation~\eqref{eq:alg_rounding},
  it holds $[\yend_i]_j \le 1$ for all $j \in \Rset$.
  We now show that the feasible set of problem~\eqref{eq:alg_MILP} is not empty.
  Since problem~\eqref{eq:PDVRP} is assumed to be feasible,
  we know that
  there exists $z_i = (x_i, B_i, Q_i) \in Z_i$. Thus, we only have to show
  that the constraint $\sum_{k:(j,k) \in \EE_A} x_i^{jk} \geq [\yend_i]_j$ for all
  $j \in \Rset$ can be satisfied by at least one vector $z_i \in Z_i$.
  
  Due to the flow constraints~\eqref{eq:PDVRP_local_con3}, the subtour elimination
  constraints~\eqref{eq:PDVRP_local_nonlcon_1} and the integer
  constraints~\eqref{eq:PDVRP_local_con5}, for all $j \in \Rset$ the quantity
  $\sum_{k:(j,k) \in \EE_A} x_i^{jk}$ is either equal to $0$ or equal to $1$,
  since there can be at most one index $k$ satisfying $(j,k) \in \EE_A$
  and such that $x_i^{jk} = 1$.
  Consider the constraint $\sum_{k:(j,k) \in \EE_A} x_i^{jk} \geq [\yend_i]_j$
  and fix a component $j \in \Rset$. Note that this constraint essentially
  imposes whether or not robot $i$ must pass through location $j$.
  Indeed, on the one hand, if $[\yend_i]_j \le 0$, the vector $z_i$ can be chosen
  such that the left-hand side is either equal to $0$ (vehicle $i$ does not pass
  through location $j$) or equal to $1$ (vehicle $i$ passes through location $j$).
  In either case, it holds $\sum_{k:(j,k) \in \EE_A} x_i^{jk} \ge 0 \ge [\yend_i]_j$
  so that the constraint is satisfied.
  On the other hand, if $0 < [\yend_i]_j \le 1$ (recall that $[\yend_i] \le 1$ for all
  $j \in \Rset$ and thus
  there are no other possibilities), then the only way to satisfy the constraint is to
  have $x_i^{jk} = 1$ for some index $k$ with $(j,k) \in \EE_A$, in which this case
  we would obtain $1 = \sum_{k:(j,k) \in \EE_A} x_i^{jk} \ge [\yend_i]_j > 0$.
  As a consequence, problem~\eqref{eq:alg_MILP} admits as feasible solution any vector
  $(x_i, B_i, Q_i) \in Z_i$ representing a path passing through all the locations
  $j \in \Rset$ and satisfying $[\yend_i]_j > 0$.
  \oprocend


%

%

\begin{IEEEbiography}
  [{\includegraphics[width=2.5cm]{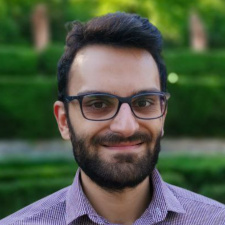}}]{Andrea Camisa}
received the Laurea degree summa cum laude in Computer Engineering from the University of Salento, Italy in 2017, the Licenza degree from the ISUFI excellence school, Italy in 2018 and the Ph.D degree in ``Biomedical, Electrical, and Systems Engineering'' from the University of Bologna, Italy in 2021.

He is a Postdoctoral Research Fellow and Adjunct Professor at the University of Bologna, Italy. He was a visiting student at the University of Stuttgart in 2017 and 2018. His research interests include reinforcement learning, convex, distributed and mixed-integer optimization.
\end{IEEEbiography}

\begin{IEEEbiography}
  [{\includegraphics[width=2.5cm]{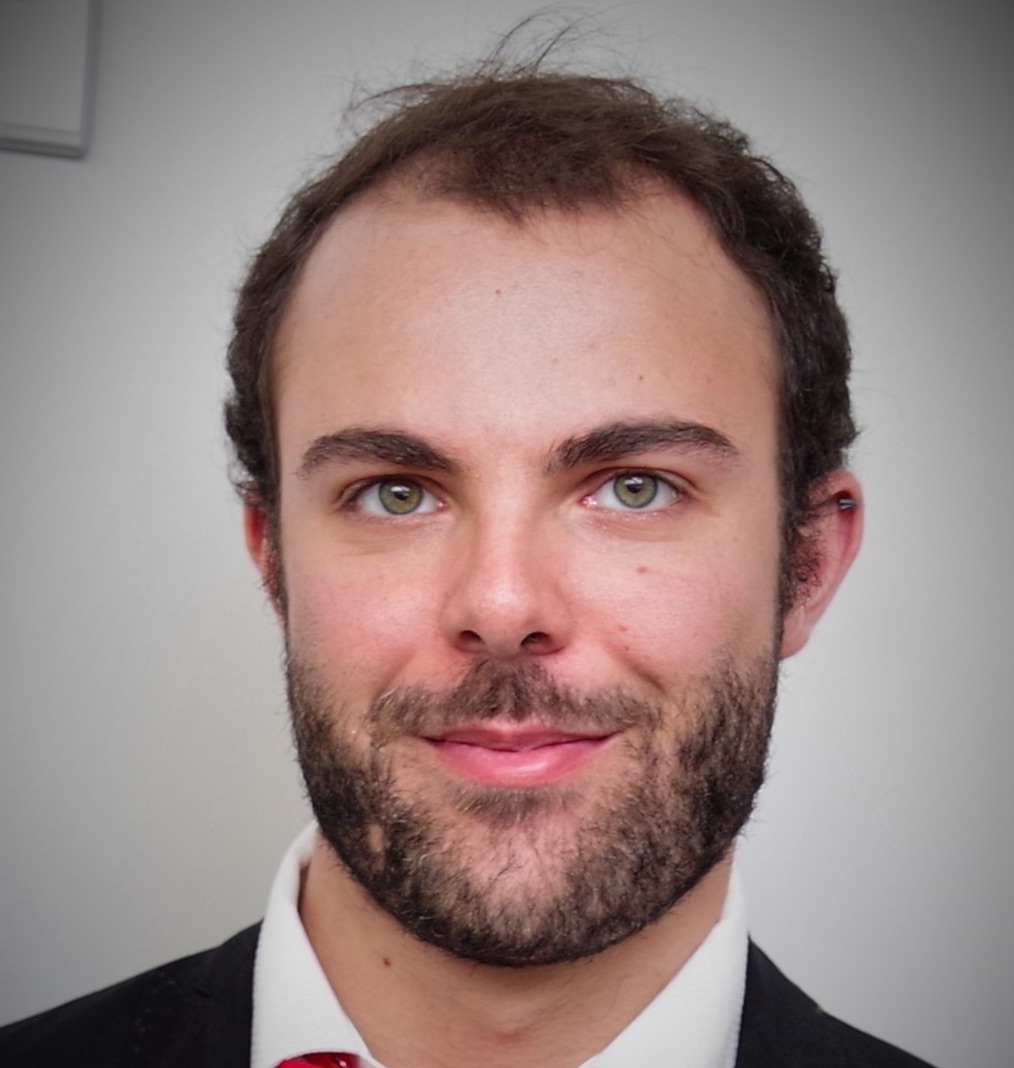}}]{Andrea Testa}
received the Laurea degree ``summa cum laude'' in Computer Engineering 
 rom the Universit\`a del Salento, Lecce, Italy in 2016
and the Ph.D degree %
in Engineering of Complex Systems 
from the same university in 2020.

He is a Research Fellow at Alma Mater Studiorum Universit\`a di Bologna, Bologna, Italy.
He was a visiting scholar at LAAS-CNRS, Toulouse, (July to September 2015 and February 2016) and at Alma Mater Studiorum Universit\`a di Bologna (October 2018 to June 2019).
His research interests include control of UAVs and distributed optimization.
\end{IEEEbiography}

\begin{IEEEbiography}
  [{\includegraphics[width=2.5cm]{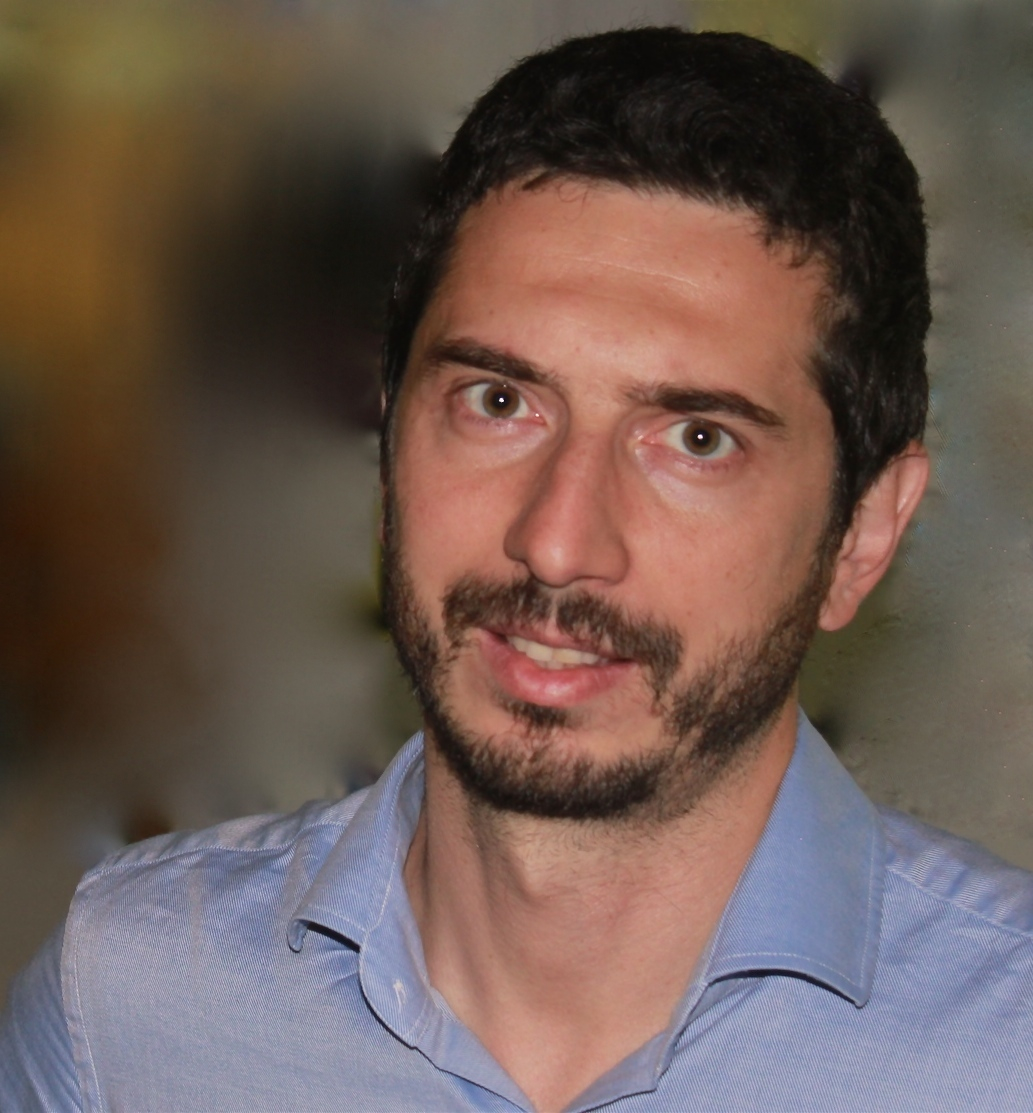}}] {Giuseppe Notarstefano}
received the Laurea degree summa cum laude in electronics engineering from the Universit\`a di Pisa, Pisa, Italy, in 2003 and the Ph.D. degree in automation and operation research from the Universit\`a di Padova, Padua, Italy, in 2007.

He is a Professor with the Department of Electrical, Electronic, and Information Engineering G. Marconi, Alma Mater Studiorum Universit\`a di Bologna, Bologna, Italy. He was Associate Professor (from June 2016 to June 2018) and
previously Assistant Professor, Ricercatore (from February 2007), with the Universit\`a del Salento, Lecce, Italy. He has been Visiting Scholar at the University of Stuttgart, University of California Santa Barbara, Santa Barbara, CA, USA and University of Colorado Boulder, Boulder, CO, USA. His research interests include distributed optimization, cooperative control in complex networks, applied nonlinear optimal control, and trajectory optimization and maneuvering of aerial and car vehicles.

Dr. Notarstefano serves as an Associate Editor for \emph{IEEE Transactions on Automatic Control}, \emph{IEEE Transactions on Control Systems Technology}, and \emph{IEEE Control Systems Letters}. He has been also part of the Conference Editorial Board of IEEE Control Systems Society and EUCA. He is recipient of an ERC Starting Grant 2014.
\end{IEEEbiography}

\end{document}